\title{Adversarial vulnerability for any classifier}
\newtheorem{theorem}{Theorem}
\newtheorem{lemma}{Lemma}
\newtheorem{example}{Example}
\newtheorem{assumption}{Assumption}
\newcommand{\Z}{\mathcal Z}
\newcommand{\cX}{\mathcal{X}}
\newcommand{\ex}[2]{\underset{#1}{\mathbb{E}}#2}
\newcommand{\RR}{\mathbb{R}}
\newcommand{\ZZ}{\mathbb{Z}}
\newcommand{\E}{\mathbb{E}}
\newcommand{\floor}[1]	{\left\lfloor #1 \right\rfloor}
\newcommand{\eps}{\eta}
\DeclareMathOperator{\dist}{dist}
\newcommand{\nc}{\newcommand}
\nc{\rnc}{\renewcommand}
\nc{\lbar}[1]{\overline{#1}}
\nc{\bra}[1]{\langle#1|}
\nc{\ket}[1]{|#1\rangle}
\nc{\ketbra}[2]{|#1\rangle\!\langle#2|}
\nc{\braket}[2]{\langle#1|#2\rangle}
\nc{\proj}[1]{| #1\rangle\!\langle #1 |}
\nc{\avg}[1]{\langle#1\rangle}
\nc{\smfrac}[2]{\mbox{$\frac{#1}{#2}$}}
\nc{\tr}{\operatorname{tr}}
\nc{\pd}{\mathrm{P}}
\nc{\gtd}{\mathrm{D}}
\nc{\gfid}{\bar{\mathrm{F}}}
\newcommand{\PP}{\mathbb{P}}
\author{
  Alhussein Fawzi \\
  DeepMind\\
  \texttt{afawzi@google.com} \\
  \And
  Hamza Fawzi \\
  Department of Applied Mathematics\\
  \& Theoretical Physics\\
  University of Cambridge \\
  \texttt{h.fawzi@damtp.cam.ac.uk} \\
  \And
  Omar Fawzi \\
  ENS de Lyon\thanks{Univ Lyon, ENS de Lyon, CNRS, UCBL,
LIP, F-69342, Lyon Cedex 07, France} \\
  \texttt{omar.fawzi@ens-lyon.fr} \\
}
\begin{document}

\maketitle

\begin{abstract}
	Despite achieving impressive performance, state-of-the-art classifiers remain highly vulnerable to small, imperceptible, adversarial perturbations.  This vulnerability has proven empirically to be very intricate to address. In this paper, we study the phenomenon of adversarial perturbations under the assumption that the data is generated with a smooth generative model. We derive fundamental upper bounds on the robustness to perturbations of any classification function, and prove the existence of adversarial perturbations that transfer well across different classifiers with small risk. Our analysis of the robustness also provides insights onto key properties of generative models, such as their smoothness and dimensionality of latent space. We conclude with numerical experimental results showing that our bounds provide informative baselines to the maximal achievable robustness on several datasets.
\end{abstract}

	\section{Introduction}
	Deep neural networks are powerful models that
	achieve state-of-the-art performance  across several domains, such as bioinformatics \cite{bio2, bio1}, speech \cite{sp2}, and computer vision \cite{he2015deep, cv2}. Though deep networks have exhibited very good performance in classification tasks, they have recently been shown to be  unstable to adversarial perturbations of the data \cite{szegedy2013intriguing, biggio2013evasion}. In fact, very small and often imperceptible perturbations of the data samples are sufficient to fool state-of-the-art classifiers and result in incorrect classification. This discovery of the surprising vulnerability of classifiers to perturbations has led to a large body of work that attempts to design robust classifiers \cite{goodfellow2014, shaham2015understanding, madry2017towards, cisse2017parseval, papernot2015distillation, alemi2016deep}. However, advances in designing robust classifiers have been accompanied with stronger perturbation schemes that defeat such defenses  \cite{carlini2017adversarial, uesato2018adversarial, robust_vision}. 
	
	In this paper, we assume that the data distribution is defined by a smooth generative model (mapping latent representations to images), and study theoretically the existence of small adversarial perturbations for arbitrary classifiers. We summarize our main contributions as follows:
	\begin{itemize}
	    \item We show fundamental upper bounds on the robustness of any classifier to perturbations, which provides a baseline to the maximal achievable robustness. When the latent space of the data distribution is high dimensional, our analysis shows that \textit{any} classifier is vulnerable to very small perturbations. Our results further suggest the existence of a tight relation between robustness and linearity of the classifier in the latent space.
	    \item We prove the existence of adversarial perturbations that transfer across different classifiers. This provides theoretical justification to previous empirical findings that highlighted the existence of such transferable perturbations.
	    \item We quantify the difference between the robustness to adversarial examples \textit{in the data manifold} and \textit{unconstrained} adversarial examples, and show that the two notions of robustness can be precisely related: for any classifier $f$ with in-distribution robustness $r$, there exists a classifier $\tilde{f}$ that achieves unconstrained robustness $r/2$. This further provides support to the empirical observations in \cite{ilyas2017robust, gilmer2018adversarial}.
	    \item We evaluate our bounds in several experimental setups (CIFAR-10 and SVHN), and show that they yield informative baselines to the maximal achievable robustness. %
	\end{itemize}
	Our robustness analysis provides in turn insights onto desirable properties of generative models capturing real-world distributions. 
	In particular, the intriguing generality of our analysis implies that when the data distribution is modeled through a smooth and generative model with high-dimensional latent space, there exist small-norm perturbations of images that fool humans for any discriminative task defined on the data distribution. If, on the other hand, it is the case that the human visual system is inherently robust to small perturbations (e.g., in $\ell_p$ norm), then our analysis shows  that a distribution over natural images cannot be modeled by smooth and high-dimensional generative models. Going forward in modeling complex natural image distributions, our results hence suggest that low dimensional, non-smooth generative models are important constraints to capture the real-world distribution of images; not satisfying such constraints can lead to small adversarial perturbations for any classifier, including the human visual system.

	\section{Related work}
	
    It was proven in \cite{fawzi2015a, nips2016_ours} that for certain families of classifiers, there exist adversarial perturbations that cause misclassification of magnitude $O(1/\sqrt{d})$, where $d$ is the data dimension, provided the robustness to random noise is fixed (which is typically the case if e.g., the data is normalized). 
	In addition, fundamental limits on the robustness of classifiers were derived in \cite{fawzi2015a} for some simple classification families. Other works have instead studied the existence of adversarial perturbations, under strong assumptions on the data distribution \cite{gilmer2018adversarial, tanay2016boundary}. In this work, motivated by the success of generative models mapping latent representations with a normal prior, we instead study the existence of robust classifiers under this general data-generating procedure and derive bounds on the robustness that hold for any classification function. A large number of techniques have recently been proposed to improve the robustness of classifiers to perturbations, such as adversarial training \cite{goodfellow2014}, robust optimization \cite{shaham2015understanding, madry2017towards}, regularization  \cite{cisse2017parseval}, distillation \cite{papernot2015distillation},  stochastic networks \cite{alemi2016deep}, etc... Unfortunately, such techniques have been shown to fail whenever a more complex attack strategy is used \cite{carlini2017adversarial, uesato2018adversarial}, or when it is evaluated on a more complex dataset. Other works have recently studied procedures and algorithms to provably guarantee a certain level of robustness \cite{hein2017formal, peck2017lower, sinha2017certifiable, raghunathan2018certified, dvijotham2018dual}, and have been applied to small datasets (e.g., MNIST). 
    For large scale, high dimensional datasets, the problem of designing robust classifiers is entirely open. We finally note that adversarial examples for generative models have recently been considered in \cite{kos2017adversarial}; our aim here is however different as our goal is to bound the robustness of classifiers when data comes from a generative model.
	
	\section{Definitions and notations}
	\label{sec:def_notations}

    Let $g$ be a generative model that maps latent vectors $z \in \mathcal{Z} := \RR^d$ to the space of images $\mathcal{X} := \mathbb{R}^m$, with $m$ denoting the number of pixels. To generate an image according to the distribution of natural images $\mu$, we generate a random vector $z \sim \nu$ according to the standard Gaussian distribution $\nu = \mathcal{N} (0, I_d)$, and we apply the map $g$; the resulting image is then $g(z)$.
	 This data-generating procedure is motivated by numerous previous works on generative models, whereby natural-looking images are obtained by transforming normal vectors through a deep neural network \cite{kingma2013auto}, \cite{goodfellow2014generative}, \cite{radford2015unsupervised}, \cite{arjovsky2017wasserstein}, \cite{gulrajani2017improved}.\footnote{Instead of sampling from $\mathcal{N}(0,I_d)$ in $\mathcal{Z}$, some generative models sample from the uniform distribution in $[-1, 1]^d$. The results of this paper can be easily extended to such generative procedures.} Let $f: \mathbb{R}^m \rightarrow \{1,\dots,K\}$ be a classifier mapping images in $\RR^m$ to discrete labels $\{1,\dots,K\}$. The discriminator $f$ partitions $\mathcal{X}$ into $K$ sets $C_i = \{x \in \cX : f(x) = i\}$ each of which corresponds to a different predicted label.
     The relative proportion of points in class $i$ is equal to $\mathbb{P}(C_i) = \nu(g^{-1}(C_i))$, the Gaussian measure of $g^{-1}(C_i)$ in $\mathcal{Z}$. 
	 
	 The goal of this paper is to study the \emph{robustness} of $f$ to additive perturbations under the assumption that the data is generated according to $g$. 
	 We define two notions of robustness. These effectively measure the minimum distance one has to travel in image space to change the classification decision. 
		\begin{itemize}
		\item \textbf{In-distribution robustness:}
		For $x = g(z)$, we define the in-distribution robustness $r_{\text{in}} (x)$ as follows:
		\[
		r_{\text{in}} (x) = \min_{r \in \mathcal{Z}} \| g(z + r) - x \| \text{ s.t. } f(g(z+r)) \neq f(x),
		\]
		where $\| \cdot \|$ denotes an arbitrary norm on $\mathcal{X}$. Note that the perturbed image, $g(z+r)$ is \textit{constrained to lie in the image} of $g$, and hence belongs to the support of the distribution $\mu$. 
		\item \textbf{Unconstrained robustness:} Unlike the in-distribution setting, we measure here the robustness to \textit{arbitrary} perturbations in the image space; that is, the perturbed image is not constrained anymore to belong to the data distribution $\mu$. 
		$$r_{\text{unc}} (x) = \min_{r \in \mathcal{X}} \| r \| \text{ s.t. } f(x+r) \neq f(x).$$
		This notion of robustness corresponds to the widely used definition of adversarial perturbations. It is easy to see that this robustness definition is smaller than the in-distribution robustness; i.e., $r_{\text{unc}} (x) \leq r_{\text{in}} (x)$. 
	\end{itemize}

	In this paper, we assume that the generative model is smooth, in the sense that it satisfies a \textit{modulus of continuity} property, defined as follows:
	\begin{assumption}
	 We assume that $g$ admits a monotone invertible modulus of continuity $\omega$; i.e.,\footnote{This assumption can be extended to random $z$ (see C.2 in the appendix). For ease of exposition however, we use here the deterministic assumption.}
	\begin{equation}
	\label{eq:modulus_cont}
	\forall z, z' \in \Z, \| g(z) - g(z') \| \leq \omega(\| z - z' \|_2).
	\end{equation}	
	\end{assumption}
	Note that the above assumption is milder than assuming Lipschitz continuity. In fact, the Lipschitz property corresponds to choosing $\omega(t)$ to be a linear function of $t$. In particular, the above assumption does not require that  $\omega(0) = 0$, which potentially allows us to model distributions with disconnected support.\footnote{In this paper, we use the term \textit{smooth} generative models to denote that the function $\omega(\delta)$ takes small values for small $\delta$.}  
	
	It should be noted that generator smoothness is a desirable property of generative models. This property is often illustrated empirically by generating images along a straight path in the latent space \cite{radford2015unsupervised}, and verifying that the images undergo gradual semantic changes between the two endpoints. In fact, smooth transitions is often used as a qualitative evidence that the generator has learned relevant factors of variation. 

	Fig. \ref{fig:illustration_generative} summarizes the problem setting and notations. Assuming that the data is generated according to $g$, we analyze in the remainder of the paper the robustness of arbitrary classifiers to perturbations.

	\begin{figure}
	    \centering
	    \includegraphics[scale=0.6]{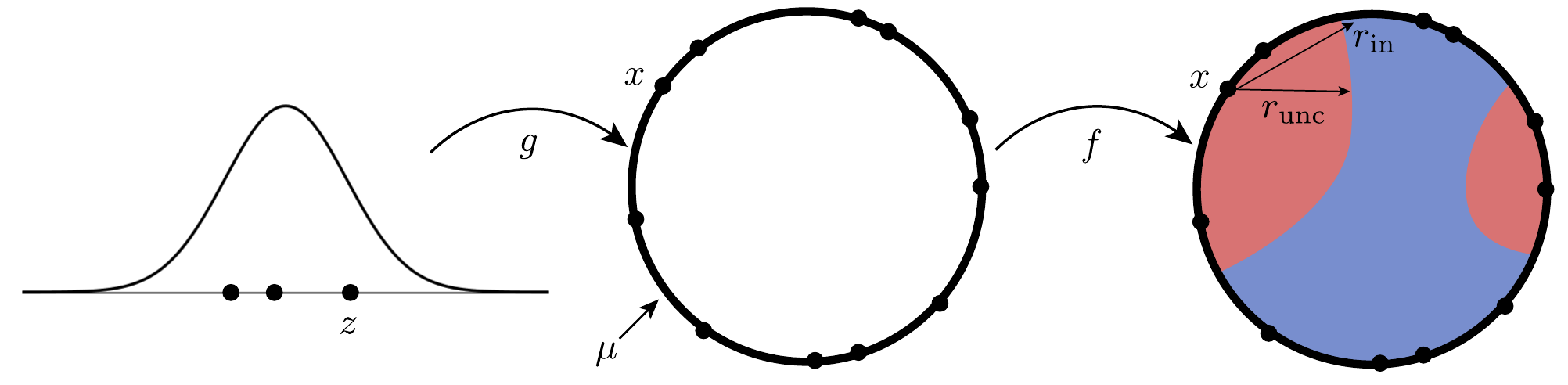}
	    \caption{\label{fig:illustration_generative}Setting used in this paper. The data distribution is obtained by mapping $\mathcal{N} (0, I_d)$ through $g$ (we set $d = 1$ and $g(z) = (\cos(2 \pi z), \sin(2 \pi z))$ in this example). The thick circle indicates the support of the data distribution $\mu$ in $\mathbb{R}^m$ ($m = 2$ here). The binary discriminative function $f$ separates the data space into two classification regions (red and blue colors). While the in-distribution perturbed image is required to belong to the data support, this is not necessarily the case in the unconstrained setting. In this paper, we do not put any assumption on $f$, resulting in potentially arbitrary partitioning of the data space. While the existence of very small adversarial perturbations seems counter-intuitive in this low-dimensional illustrative example (i.e., $r_{\text{in}}$ and $r_{\text{unc}}$ can be large for some choices of $f$), we show in the next sections that this is the case in high dimensions.}
	\end{figure}

	\section{Analysis of the robustness to perturbations}

    \subsection{Upper bounds on robustness}

    We state a general bound on the robustness to perturbations and derive two special cases to make more explicit the dependence on the distribution 
    and number of classes.    
    
	\begin{theorem}
		\label{thm:image_space_bounds}
		Let $f: \mathbb{R}^m \rightarrow \{1, \dots, K\}$ be an arbitrary classification function defined on the image space. Then, the fraction of datapoints having robustness less than $\eta$ satisfies:
		\begin{align}
		\label{eq:main_theorem_image_space}
		\mathbb{P} \left( r_{\text{in}} (x) \leq \eta \right) \geq \sum_{i=1}^{K} (\Phi(a_{\neq i} + \omega^{-1}(\eta)) - \Phi(a_{\neq i})) \ ,
		\end{align}
		where $\Phi$ is the cdf of $\mathcal{N}(0,1)$, and $a_{\neq i} = \Phi^{-1} \left(\mathbb{P} \left(\bigcup\limits_{j \neq i} C_j \right)\right)$. 
        
        In particular, if for all $i$, $\mathbb{P}(C_i) \leq \frac{1}{2}$ (the classes are not too unbalanced), we have
		\begin{align}
		\label{eq:prob_bound_half}
		\mathbb{P} \left(  r_{\text{in}} (x)  \leq \eta \right) \geq 1 - \sqrt{\frac{\pi}{2}} e^{-\omega^{-1}(\eps)^2/2} \ .
		\end{align}
		
		To see the dependence on the number of classes more explicitly, consider the setting where the classes are equiprobable, i.e., $\mathbb{P} (C_i) = \frac{1}{K}$ for all $i$, $K \geq 5$, then 
		\begin{align}
		\label{eq:prob_bound_class_dependence}
		\mathbb{P} \left( r_{\text{in}} (x) \leq \eta \right) &\geq 1 - \sqrt{\frac{\pi}{2}} e^{-\omega^{-1}(\eps)^2/2} 
		e^{-\eps \sqrt{\log\left(\frac{K^2}{4\pi \log(K)}\right)}} \ .
		\end{align}
	\end{theorem}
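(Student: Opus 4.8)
The plan is to transport the whole problem from image space into the latent (Gaussian) space and then invoke the Gaussian isoperimetric inequality. Write $D_i := g^{-1}(C_i)$, so that $\{D_i\}_{i=1}^K$ partitions $\mathcal{Z} = \RR^d$ and $\nu(D_i) = \mathbb{P}(C_i)$, and set $\eta' := \omega^{-1}(\eta)$. The first step is the key reduction: if $z \in D_i$ and there is a point $z'$ with $\|z - z'\|_2 \le \eta'$ lying in $D_i^c = \bigcup_{j\ne i} D_j$, then by the modulus-of-continuity Assumption and monotonicity of $\omega$ we get $\|g(z') - g(z)\| \le \omega(\|z-z'\|_2) \le \omega(\eta') = \eta$, while $f(g(z')) \ne i = f(x)$; hence $z'-z$ is a feasible perturbation certifying $r_{\text{in}}(x) \le \eta$. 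In other words, for $z \in D_i$ the event $\{\dist(z, D_i^c) \le \eta'\}$ is contained in $\{r_{\text{in}}(x) \le \eta\}$, so with $(D_i^c)_{\eta'}$ denoting the Euclidean $\eta'$-enlargement of $D_i^c$ I obtain $\mathbb{P}(r_{\text{in}}(x)\le\eta) \ge \sum_{i=1}^K \nu\big(D_i \cap (D_i^c)_{\eta'}\big)$, the terms being disjoint since the $D_i$ partition $\mathcal{Z}$.

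Second, since $D_i^c \subseteq (D_i^c)_{\eta'}$, the set $D_i \cap (D_i^c)_{\eta'}$ equals $(D_i^c)_{\eta'} \setminus D_i^c$, so $\nu(D_i\cap(D_i^c)_{\eta'}) = \nu((D_i^c)_{\eta'}) - \nu(D_i^c)$. I then apply the Gaussian isoperimetric inequality to $A = D_i^c$: writing $\nu(D_i^c) = \Phi(a_{\ne i})$ with $a_{\ne i} = \Phi^{-1}(\mathbb{P}(\bigcup_{j\ne i}C_j))$, the inequality states that the enlargement of $A$ has measure at least that of a half-space of the same measure, i.e. $\nu((D_i^c)_{\eta'}) \ge \Phi(a_{\ne i} + \eta')$. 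Combining and summing over $i$ yields exactly \eqref{eq:main_theorem_image_space}.

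Third, for the balanced case $\mathbb{P}(C_i) \le \tfrac12$ I rewrite the right-hand side. Using $\Phi(a_{\ne i}) = 1 - \mathbb{P}(C_i)$ and $\Phi(a_{\ne i}+\eta') = 1 - \Phi(\Phi^{-1}(\mathbb{P}(C_i)) - \eta')$, the bound becomes $\mathbb{P}(r_{\text{in}}\le\eta) \ge 1 - \sum_i \psi(p_i)$ with $p_i = \mathbb{P}(C_i)$ and $\psi(p) := \Phi(\Phi^{-1}(p) - \eta')$. A short computation gives $\psi'(p) = e^{\eta'\Phi^{-1}(p) - (\eta')^2/2}$, which is increasing, so $\psi$ is convex with $\psi(0)=0$; convexity then gives $\psi(p) \le 2\psi(\tfrac12)\,p$ for $p \in [0,\tfrac12]$, and since $\sum_i p_i = 1$ I get $\sum_i \psi(p_i) \le 2\psi(\tfrac12) = 2(1 - \Phi(\eta'))$. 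Finishing with the standard Gaussian tail estimate $1 - \Phi(t) \le \tfrac12 e^{-t^2/2} \le \tfrac12\sqrt{\pi/2}\,e^{-t^2/2}$ gives \eqref{eq:prob_bound_half}.

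Finally, for equiprobable classes all $a_{\ne i}$ coincide at $a := \Phi^{-1}(1 - \tfrac1K)$, so the main bound reads $\mathbb{P}(r_{\text{in}} > \eta) \le K\,(1 - \Phi(a + \eta')) = K\,Q(a+\eta')$ with $Q := 1-\Phi$. Plugging the sharp tail bound $Q(t) \le \phi(t)/t$ and factoring $e^{-(a+\eta')^2/2} = e^{-a^2/2}e^{-\eta' a}e^{-(\eta')^2/2}$, the prefactor is controlled using $Q(a)=1/K$, and the decisive ingredient is a lower bound $a = \Phi^{-1}(1-\tfrac1K) \ge \sqrt{\log(K^2/(4\pi\log K))}$, which I would establish by checking $Q(\sqrt{\log(K^2/(4\pi\log K))}) \ge 1/K$ via the matching lower tail bound $Q(t) \ge \tfrac{\phi(t)}{t}(1 - t^{-2})$ (here the hypothesis $K \ge 5$ ensures $a$ is large enough for this estimate to apply); substituting $e^{-\eta' a} \le e^{-\eta'\sqrt{\log(K^2/(4\pi\log K))}}$ then yields \eqref{eq:prob_bound_class_dependence}. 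I expect the conceptual heart to be the first two steps, namely recognizing that the modulus of continuity converts image-space robustness into a Euclidean distance-to-complement problem in latent space so that Gaussian isoperimetry applies; the two corollaries are essentially tail-bound bookkeeping, the only delicate point being the non-asymptotic lower bound on $\Phi^{-1}(1-1/K)$ with the precise constants.
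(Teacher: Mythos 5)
Your reduction to latent space and the application of Gaussian isoperimetry are exactly the paper's argument: the paper defines $B_{i\rightarrow}=\{z\in B_i:\dist(z,\cup_{j\neq i}B_j)\le\omega^{-1}(\eta)\}$, notes $g(B_{i\rightarrow})\subseteq C_{i\rightarrow}$, and applies the isoperimetric inequality to $A=\cup_{j\neq i}B_j$, which is your $D_i^c$. For the bound \eqref{eq:prob_bound_half} you take a genuinely different route. The paper proves a lemma of the form $\Phi(\Phi^{-1}(p)+t)\ge 1-(1-p)\sqrt{\pi/2}\,e^{-t^2/2}e^{-t\Phi^{-1}(p)}$ by dividing the two explicit two-sided Gaussian tail estimates against each other, and then sums over $i$ using $\sum_i\PP(C_i)=1$. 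Your convexity argument --- $\psi(p)=\Phi(\Phi^{-1}(p)-\eta')$ has increasing derivative $e^{\eta'\Phi^{-1}(p)-(\eta')^2/2}$, hence $\psi(p)\le 2p\,\psi(1/2)$ on $[0,1/2]$ --- is correct and arguably cleaner; it even yields the slightly stronger constant $1$ in place of $\sqrt{\pi/2}$ before you deliberately weaken it to match the stated form. What it does not buy you is the factor $e^{-\eta'\Phi^{-1}(p)}$ that the paper's lemma retains, which is precisely what drives the $K$-dependence in \eqref{eq:prob_bound_class_dependence}; so your third part has to rebuild that from scratch, which is where the trouble arises.

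The gap is in your proposed proof of the key inequality $\Phi^{-1}(1-1/K)\ge\sqrt{\log\bigl(K^2/(4\pi\log K)\bigr)}$. You want to verify it by checking $1-\Phi(x_0)\ge 1/K$ at $x_0=\sqrt{\log(K^2/(4\pi\log K))}$ using the lower tail estimate $1-\Phi(t)\ge\frac{1}{\sqrt{2\pi}}\frac{e^{-t^2/2}}{t}(1-t^{-2})$, claiming $K\ge5$ makes the argument large enough. It does not: for $K=5$ one has $x_0\approx 0.46$ and $\Phi^{-1}(1-1/K)\approx 0.84$, both below $1$, where the factor $(1-t^{-2})$ is negative and the estimate is vacuous; a direct computation shows this route still fails for, e.g., $K=10$ and $K=20$ because the $(1-t^{-2})$ correction is too lossy in the moderate range. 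The same defect infects your control of the prefactor via $1-\Phi(a)=1/K$. The fix is to use a tail lower bound that stays nontrivial near the origin, namely $1-\Phi(t)\ge\frac{1}{\sqrt{2\pi}}\frac{2e^{-t^2/2}}{t+\sqrt{t^2+4}}$ (the form the paper uses), after which the claim reduces to $\sqrt{\log K}\ge\sqrt{x_0^2+1}$, i.e.\ $\log(4\pi\log K)\ge 2$, which holds for all $K\ge 5$. Separately, note that the paper's Lemma~\ref{lem:prop_gaussian_cdf} requires $\omega^{-1}(\eta)\ge 1$ for \eqref{eq:prob_bound_class_dependence}; your prefactor bookkeeping will need an analogous restriction, which you should make explicit.
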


	This theorem is a consequence of the Gaussian isoperimetric inequality first proved in~\cite{borell1975brunn} and~\cite{sudakov1978extremal}. The  proofs can be found in the appendix. 

\vspace{2mm}

	\textbf{Remark 1. Interpretation.} For easiness of interpretation, we assume that the function $g$ is Lipschitz continuous, in which case $\omega^{-1}(\eps)$ is replaced with $\eps/L$ where $L$ is the Lipschitz constant. Then, Eq. (\ref{eq:prob_bound_half}) shows the existence of perturbations of norm $\eps \propto L$ that can fool any classifier. This norm should be compared to the typical norm given by  $\mathbb{E} \| g(z) \|$. By normalizing the data, we can assume $\mathbb{E} \| g(z) \| = \mathbb{E} \| z \|_2$ without loss of generality.\footnote{Without this assumption, the following discussion applies if we replace the Lipschitz constant with the normalized Lipschitz constant $L' = L \frac{\mathbb{E} \| z \|_2}{\mathbb{E} \| g(z) \|}$.} As $z$ has a normal distribution, we have $\mathbb{E} \| z \|_2 \in [\sqrt{d-1}, \sqrt{d}]$ and thus the typical norm of an element in the data set satisfies $\mathbb{E} \| g(z) \| \geq \sqrt{d-1}$. Now if we plug in $\eta = 2 L$, we obtain that the robustness is less than $2 L$ with probability exceeding 0.8. This should be compared to the typical norm which is at least $\sqrt{d-1}$. Our result therefore shows that when $d$ is large and $g$ is smooth (in the sense that $L \ll \sqrt{d}$), there exist small adversarial perturbations that can fool arbitrary classifiers $f$. Fig. \ref{fig:high_probability_d} provides an  illustration of the upper bound, in the case where $\omega$ is the identity function.
	
	\textbf{Remark 2. Dependence on $K$.} Theorem \ref{thm:image_space_bounds} shows an \textit{increasing} probability of misclassification with the number of classes $K$. In other words, it is easier to find adversarial perturbations in the setting where the number of classes is large, than for a binary classification task.\footnote{We assume here equiprobable classes.} This dependence confirms empirical results whereby the robustness is observed to decrease with the number of classes. The dependence on $K$ captured in our bounds is in contrast to previous bounds that showed decreasing probability of fooling the classifier, for larger number of classes \cite{nips2016_ours}.

	\textbf{Remark 3. Classification-agnostic bound.} Our bounds hold for any classification function $f$, and are not specific to a family of classifiers. This is unlike the work of~\cite{fawzi2015a} that establishes bounds on the robustness for specific classes of functions (e.g., linear or quadratic classifiers).
	
	\textbf{Remark 4. How tight is the upper bound on robustness in Theorem \ref{thm:image_space_bounds}?} Assuming that the smoothness assumption in Eq. \ref{eq:modulus_cont} is an equality, let the classifier $f$ be such that $f \circ g$ separates the latent space into $B_1 = g^{-1}(C_1) = \{ z: z_1 \geq 0 \}$ and $B_2 = g^{-1}(C_2) = \{ z: z_1 < 0 \}$. Then, it follows that 
	\begin{align*}
	\mathbb{P} (r_{\text{in}} (x)) \leq \eta) & = \mathbb{P} (\exists r: \| g(z+r) - g(z) \| \leq \eta, f(g(z+r)) \neq f(g(z)))) \\
	                                          & = \mathbb{P} (\exists r: \| r \|_2  \leq \omega^{-1}(\eta), \text{sgn} (z_1 + r_1) \text{sgn} (z_1) < 0) \\ 
	                                          & = \mathbb{P} (z \in B_1,  z_1 < \omega^{-1}(\eta)) + \mathbb{P} (z \in B_2,  z_1 \geq -\omega^{-1}(\eta)) = 2 (\Phi(\omega^{-1}(\eta)) - \Phi(0)),
	\end{align*}
	which precisely corresponds to Eq. (\ref{eq:main_theorem_image_space}). In this case, the bound in Eq. (\ref{eq:main_theorem_image_space}) is therefore an equality. More generally, this bound is an equality if the classifier induces linearly separable regions in the latent space.\footnote{In the case where Eq. (\ref{eq:modulus_cont}) is an inequality, we will not exactly achieve the bound, but get closer to it when $f \circ g$ is linear.} This suggests that classifiers are maximally robust when the induced classification boundaries in the latent space are linear. We stress on the fact that boundaries in the $\mathcal{Z}$-space can be very different from the boundaries in the image space. In particular, as $g$ is in general non-linear, $f$ might be a highly \textit{non-linear function} of the input space, while $z \mapsto (f \circ g) (z)$ is a linear function in $z$. We provide an explicit example in the appendix illustrating this remark.
	
   \begin{figure}[ht]
	\centering
	\includegraphics[width=0.4\textwidth]{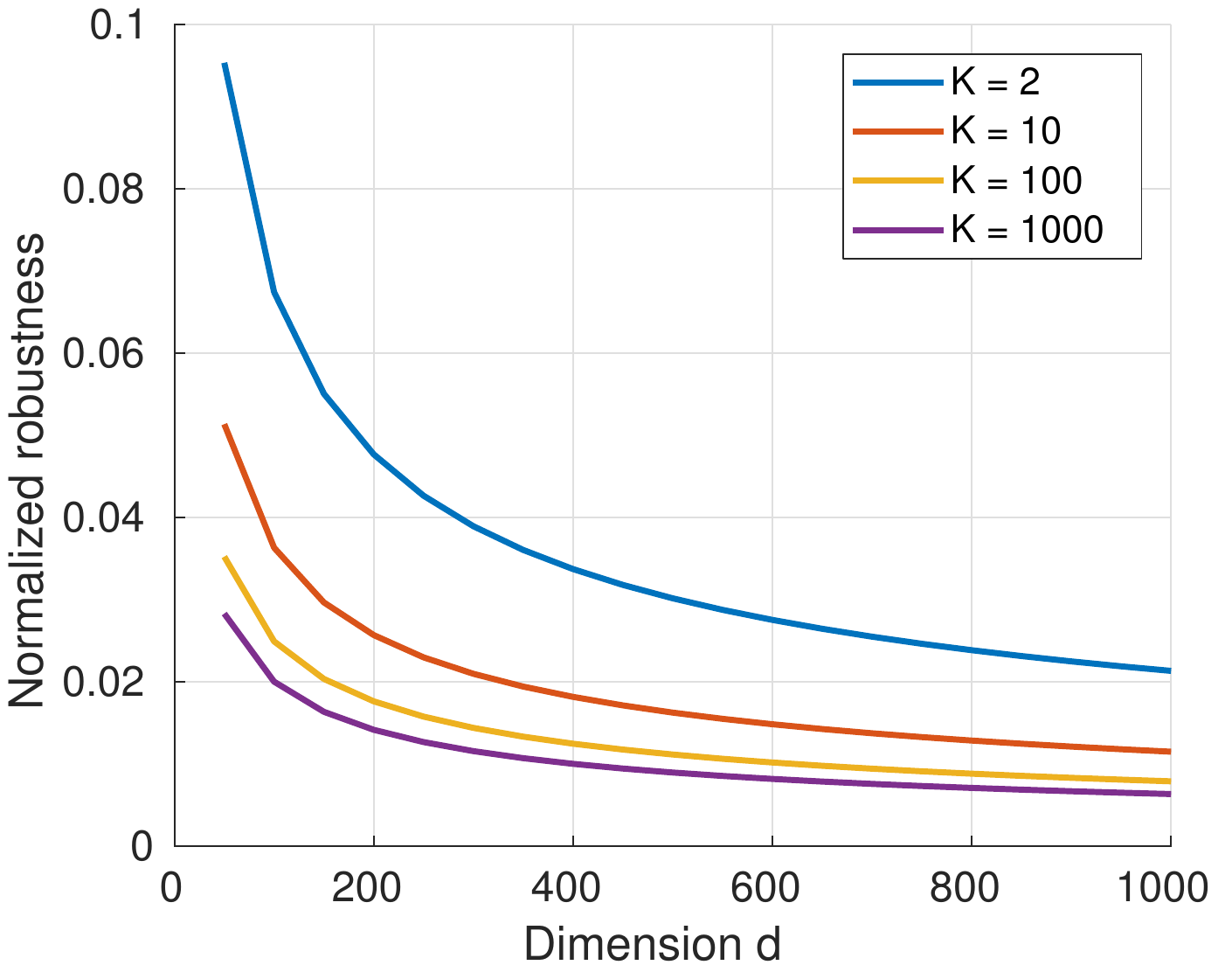}
	\caption{\label{fig:high_probability_d} Upper bound (Theorem \ref{thm:image_space_bounds}) on the median of the normalized robustness $r_{\text{in}} / \sqrt{d}$ for different values of the number of classes $K$, in the setting where  $\omega(t) = t$. We assume that classes have equal measure (i.e., $\mathbb{P}(C_i) = 1/K$).}
    \end{figure}

	\textbf{Remark 5. Adversarial perturbations in the latent space} While the quantities introduced in Section \ref{sec:def_notations} measure the robustness in the \textit{image space}, an alternative is to measure the robustness in the \textit{latent space}, defined as $r_{Z} = \min_{r} \| r \|_2 \text{ s.t. } f(g(z+r)) \neq f(g(z))$. For natural images, latent vectors provide a decomposition of images into meaningful factors of variation, such as features of objects in the image, illumination, etc... Hence, perturbations of vectors in the latent space measure the amount of change one needs to apply to such meaningful latent features to cause data misclassification. A bound on the magnitude of the minimal perturbation in the latent space (i.e., $r_{Z}$) can be directly obtained from Theorem \ref{thm:image_space_bounds} by setting $\omega$ to identity (i.e., $\omega(t) = t$). Importantly, note that no assumptions on the smoothness of the generator $g$ are required for our bounds to hold when considering this notion of robustness.
    
	\textbf{Relation between in-distribution robustness and unconstrained robustness.}
	
	While the previous bound is specifically looking at the in-distribution robustness, in many cases, we are interested in achieving \textit{unconstrained} robustness; that is, the perturbed image is not constrained to belong to the data distribution (or equivalently to the range of $g$). It is easy to see that any bound derived for the in-distribution robustness $ r_{\text{in}} (x)$ also holds for the unconstrained robustness $r_{\text{unc}}(x)$ since it clearly holds that $r_{\text{unc}}(x) \leq r_{\text{in}} (x)$. One may wonder whether it is possible to get a better upper bound on $r_{\text{unc}}(x)$ directly. We show here that this is not possible if we require our bound to hold for any general classifier. Specifically, we construct a family of classifiers for which $r_{\text{unc}}(x) \geq \frac{1}{2} r_{\text{in}} (x)$, which we now present:

	For a given classifier $f$ in the image space, define the classifier $\tilde{f}$ constructed in a nearest neighbour strategy:
	\begin{align}
	\label{eq:classiifer_h}
	\tilde{f}(x) = f(g(z^*)) \quad \text{ with } \quad z^* = \arg\min_{z} \| g(z) - x \|.
	\end{align}
	Note that $\tilde{f}$ behaves exactly in the same way as $f$ on the image of $g$ (in particular, it has the same risk and in-distribution robustness). We show here that it has an unconstrained robustness that is at least half of the in-distribution robustness of $f$.
	
	\begin{theorem}
		\label{thm:relation_two_robustness}
		For the classifier $\tilde{f}$, we have $r_{\text{unc}}(x) \geq \frac{1}{2} r_{\text{in}} (x)$.
	\end{theorem}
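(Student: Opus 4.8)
The plan is to exploit the nearest-neighbour definition of $\tilde{f}$ together with a single triangle inequality. First I would record the key reduction: writing $M = g(\mathcal{Z})$ for the data manifold, the in-distribution robustness of $f$ at a data point $x = g(z)$ equals the distance from $x$ to the nearest point of $M$ carrying a different label, namely
$$r_{\text{in}}(x) = \inf\{ \|y - x\| : y \in M,\ f(y) \neq f(x) \},$$
since $g(z+r)$ ranges over all of $M$ as $r$ ranges over $\mathcal{Z}$. I would also note that because $x \in M$, its own nearest neighbour on $M$ is itself, so $z^* = z$ is optimal in \eqref{eq:classiifer_h} and hence $\tilde{f}(x) = f(x)$.

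Next I would take an arbitrary unconstrained perturbation $r$ that fools $\tilde{f}$, i.e.\ $\tilde{f}(x + r) \neq \tilde{f}(x) = f(x)$, and analyze where $x+r$ projects onto $M$. Let $y = g(z^*)$ be a nearest point of $M$ to $x + r$, so that by definition $\tilde{f}(x+r) = f(y)$; the fooling condition then reads $f(y) \neq f(x)$, which places $y$ among the competitors in the infimum above and therefore forces $\|y - x\| \geq r_{\text{in}}(x)$.

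The crux is to bound $\|y - x\|$ from above in terms of $\|r\|$. Since $x$ itself lies in $M$, the point $y$ --- being at least as close to $x+r$ as $x$ is --- satisfies $\|y - (x+r)\| \leq \|x - (x+r)\| = \|r\|$. Combining this with the triangle inequality gives
$$r_{\text{in}}(x) \leq \|y - x\| \leq \|y - (x+r)\| + \|r\| \leq 2\|r\|,$$
so every fooling perturbation obeys $\|r\| \geq \tfrac{1}{2} r_{\text{in}}(x)$; taking the infimum over all such $r$ yields $r_{\text{unc}}(x) \geq \tfrac{1}{2} r_{\text{in}}(x)$.

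I do not expect a serious obstacle here: the argument is essentially a projection-onto-the-manifold triangle inequality, and the factor $\tfrac12$ arises precisely because the adversarial point $x+r$ can sit ``between'' $x$ and the differently-labelled region, paying $\|r\|$ once to leave $x$ and effectively once more through its projection. The only technical care needed is the existence of the minimizer $z^*$ (equivalently of a nearest manifold point $y$), which is guaranteed if $M$ is closed; in the general case I would replace the exact projection with a near-optimal $y$ satisfying $\|y-(x+r)\| \le \inf_{y'\in M}\|y'-(x+r)\| + \varepsilon$ and let $\varepsilon \to 0$, which carries the same bound through in the limit.
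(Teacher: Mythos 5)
Your proof is correct and follows essentially the same route as the paper's: you set $x' = x + r$, use the nearest-neighbour property $\|x' - g(z^*)\| \leq \|x' - x\|$, and apply one triangle inequality to get $\|g(z^*) - x\| \leq 2\|r\|$, exactly as in the paper. Your extra remarks (that $\tilde{f}$ agrees with $f$ on the data manifold, and the $\varepsilon$-approximate projection when the minimizer may not exist) are sound refinements of details the paper leaves implicit.
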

	
	This result shows that if a classifier has in-distribution robustness $r$, then we can construct a classifier with unconstrained robustness $r/2$, through a simple modification of the original classifier $f$. 
	Hence, classification-agnostic limits derived for both notions of robustness are essentially the same. It should further be noted that the procedure in Eq. (\ref{eq:classiifer_h}) provides a constructive method to increase the robustness of any classifier to unconstrained perturbations. Such a nearest neighbour strategy is useful when the in-distribution robustness is much larger than the unconstrained robustness, and permits the latter to match the former. This approach has recently been found to be successful in increasing the robustness of classifiers when accurate generative models can be learned in \cite{defensegan}. Other techniques \cite{ilyas2017robust} build on this approach, and further use methods to  increase the in-distribution robustness.
	
\subsection{Transferability of perturbations}

One of the most intriguing properties about adversarial perturbations is their transferability \cite{szegedy2013intriguing, liu2016delving} across different models. Under our data model distribution, we study the existence of transferable adversarial perturbations, and show that two models with approximately zero risk will have shared adversarial perturbations. 
\begin{theorem}[Transferability of perturbations]
	\label{thm:transferability}
	Let $f,h$ be two classifiers. Assume that $\mathbb{P} (f \circ g(z) \neq h \circ g(z) ) \leq \delta$ (e.g., if $f$ and $h$ have a risk bounded by $\delta/2$ for the data set generated by $g$). In addition, assume that 
	$\mathbb{P}(C_i(f)) + \delta \leq \frac{1}{2}$ for all $i$.\footnote{This assumption is only to simplify the statement, a general statement can be easily derived in the same way.} Then, 
	\begin{equation}
	\begin{aligned}
	& \mathbb{P}  \left\{ \exists v  : \| v \|_2 \leq \eps \text{ and } \begin{array}{ll} f(g(z)+v) \neq f(g(z)) \\ h(g(z)+v) \neq h(g(z)) \end{array} \right\}\\
	& \qquad \qquad \qquad \geq 1 - \sqrt{\frac{\pi}{2}} e^{-\omega^{-1}(\eps)^2/2} - 2 \delta.
	\end{aligned}
	\end{equation}
\end{theorem}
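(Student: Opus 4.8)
The plan is to reduce the existence of a common image-space perturbation to the existence of a common \emph{target in latent space} that flips both classifiers, and then to run the Gaussian-isoperimetric argument underlying Theorem~\ref{thm:image_space_bounds} on a carefully chosen family of sets. First I would observe that it suffices to find, for a given $z$, a latent point $z' = z+r$ with $\|r\|_2 \le t := \omega^{-1}(\eta)$ such that $f(g(z')) \neq f(g(z))$ and $h(g(z')) \neq h(g(z))$ simultaneously. Indeed, setting $v = g(z') - g(z)$ we have $g(z)+v = g(z')$, so both classifiers are fooled by $v$, while the modulus of continuity (Eq.~\eqref{eq:modulus_cont}) gives $\|v\|_2 \le \omega(\|r\|_2) \le \eta$. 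Hence the event in the statement contains $\{\exists\, z' : \|z'-z\|_2 \le t,\ f(g(z'))\neq f(g(z)),\ h(g(z'))\neq h(g(z))\}$, and it is this latent event that I would lower-bound.

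Next, let $D = \{z : f(g(z)) \neq h(g(z))\}$ be the disagreement set, so $\nu(D) \le \delta$ by hypothesis, and let $S = D^c$. On $S$ the two classifiers share a label, so for each class $i$ I define the ``joint-flip'' target set $W_i = \{z' : f(g(z')) \neq i \text{ and } h(g(z')) \neq i\}$. For $z \in B_i(f) \cap S$, where $B_i(f) = g^{-1}(C_i(f))$, the common label is $i$ and a valid common target exists as soon as $\mathrm{dist}(z, W_i) \le t$. Thus the desired probability is at least $\sum_{i} \nu\big(B_i(f) \cap S \cap (W_i)_t\big)$, where $(W_i)_t$ denotes the Euclidean $t$-enlargement of $W_i$. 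I would then invoke the Gaussian isoperimetric inequality exactly as in Theorem~\ref{thm:image_space_bounds} to get $\nu((W_i)_t) \ge \Phi(\Phi^{-1}(\nu(W_i)) + t)$, and peel off the restrictions to $B_i(f)$ and $S$ as error terms.

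The heart of the argument, and the step I expect to be the main obstacle, is the bookkeeping that keeps the total error at $2\delta$ rather than $2K\delta$. Writing $W_i^c = B_i(f) \cup B_i(h)$, the part of $W_i^c$ lying outside $B_i(f)$ is contained in $D$, which lets me control both $\nu(A_i^f \setminus W_i)$ (with $A_i^f = B_i(f)^c$) and $\nu(B_i(f)\cap S^c)$ by contributions of the form $\nu(B_i(f)\cap D)$ and $\nu(B_i(h)\cap D)$. The naive route, which instead only forces the target to leave $B_i(f)$ and re-enter $S$, produces the large union $\bigcup_{j\neq i} B_j(f)$ and a wasteful factor $K-1$ in the error. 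Using $W_i$ avoids this: since both $\{B_i(f)\}_i$ and $\{B_i(h)\}_i$ partition $\mathcal{Z}$, the correction terms telescope over $i$ to $\nu(D)$ twice, giving exactly $2\delta$. Concretely, I obtain $\sum_i \nu(B_i(f)\cap S \cap (W_i)_t) \ge \sum_i\big(\Phi(b_i + t) - \Phi(b_i)\big) - 2\delta$ with $b_i = \Phi^{-1}(\nu(W_i))$.

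Finally, since $W_i^c \subseteq B_i(f) \cup (B_i(h)\cap D)$ we have $\nu(W_i) \ge 1 - \mathbb{P}(C_i(f)) - \delta$, and the assumption $\mathbb{P}(C_i(f)) + \delta \le \tfrac12$ then forces $\nu(W_i) \ge \tfrac12$, i.e. $b_i \ge 0$ for every $i$. The sum $\sum_i(\Phi(b_i+t)-\Phi(b_i))$ with all $b_i \ge 0$ is lower-bounded by the identical tail estimate used to pass from Eq.~\eqref{eq:main_theorem_image_space} to Eq.~\eqref{eq:prob_bound_half}, namely $1 - \sqrt{\pi/2}\,e^{-t^2/2}$. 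Substituting $t = \omega^{-1}(\eta)$ yields the claimed bound $1 - \sqrt{\pi/2}\,e^{-\omega^{-1}(\eta)^2/2} - 2\delta$.
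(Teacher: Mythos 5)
Your proposal is correct and follows essentially the same route as the paper's own proof: your set $W_i$ is exactly the paper's $\overline{B_i(f)} \cap \overline{B_i(h)}$, the isoperimetric step, the use of $\mathbb{P}(C_i(f)) + \delta \le \tfrac12$ to ensure $\nu(W_i) \ge \tfrac12$, and the accounting of the error via the symmetric differences $B_i(f)\,\triangle\, B_i(h)$ summing to $2\nu(D) \le 2\delta$ all match the paper's argument.
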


Compared to Theorem \ref{thm:image_space_bounds} which bounds the robustness to adversarial perturbations, the extra price to pay here to find \textit{transferable} adversarial perturbations is the $2 \delta$ term, which is small if the risk of both classifiers is small. Hence, our bounds provide a theoretical explanation for the existence of transferable adversarial perturbations, which were previously shown to exist in \cite{szegedy2013intriguing, liu2016delving}. 
The existence of transferable adversarial perturbations across several models with small risk has important security implications, as adversaries can, in principle, fool different classifiers with a single, classifier-agnostic, perturbation. The existence of such perturbations significantly reduces the difficulty of attacking (potentially black box) machine learning models.
	
	\subsection{Approximate generative model}

    In the previous results, we have assumed that the data distribution is exactly described by the generative model $g$ (i.e., $\mu = g_{*} (\nu)$ where $g_*(\nu)$ is the pushforward of $\nu$ via $g$). However, in many cases, such generative models only provide an \textit{approximation} to the true data distribution $\mu$. In this section, we specifically assume that the generated distribution $g_{*} (\nu)$ provides an approximation to the true underlying distribution in the 1-Wasserstein sense on the metric space $(\mathcal{X}, \| \cdot \|)$; i.e.,  $W(g_{*}(\nu), \mu) \leq \delta$, and derive upper bounds on the robustness. This assumption is in line with recent advances in generative models, whereby the generator provides a good approximation (in the Wasserstein sense) to the true distribution, but does not exactly fit it \cite{arjovsky2017wasserstein}. 
     We show here that similar upper bounds on the robustness (in expectation) hold, as long as $g_{*} (\nu)$ provides an accurate approximation of the true distribution $\mu$. 

	\begin{theorem}
		\label{thm:expectation_image_in}
		We use the same notations as in Theorem \ref{thm:image_space_bounds}. 
		Assume that the generator $g$ provides a $\delta$ \textit{approximation} of the true distribution $\mu$ in the 1-Wasserstein sense on the metric space $(\mathcal{X}, \| \cdot \|)$; that is, $W(g_*(\nu), \mu) \leq \delta$ (where $g_*(\nu)$ is the pushforward of $\nu$ via $g$), the following inequality holds provided $\omega$ is concave
		\[
		\ex{x \sim \mu}{r_{\text{unc}} (x)} \leq \omega \left( \sum_{i=1}^{K} -a_{\neq i} \Phi(-a_{\neq i}) + \frac{e^{-a_{\neq i}^2/2}}{\sqrt{2\pi}} \right) + \delta,
		\]
		where $r_{\text{unc}} (x)$ is the unconstrained robustness in the image space. In particular, for $K \geq 5$ equiprobable classes, we have
		\[
		\ex{x \sim \mu}{r_{\text{unc}} (x)} \leq \omega \left( \frac{\log(4\pi \log(K))}{\sqrt{2\log(K)}} \right) + \delta.
		\]
	\end{theorem}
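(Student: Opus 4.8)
The plan is to first bound the expected robustness when $x$ is drawn from the exact generative distribution $g_*(\nu)$, and then transfer this bound to the true distribution $\mu$ using the Wasserstein assumption. The transfer step rests on the observation that $x \mapsto r_{\text{unc}}(x)$ is $1$-Lipschitz on $(\mathcal{X},\|\cdot\|)$: since $r_{\text{unc}}(x) = \dist(x, \{x' : f(x') \neq f(x)\})$, if $f(x)=f(y)$ then both values are distances to the \emph{same} set and the claim follows from Lipschitzness of distance-to-a-set functions, whereas if $f(x)\neq f(y)$ then $y$ itself witnesses $r_{\text{unc}}(x)\leq\|x-y\|$ and symmetrically, so $|r_{\text{unc}}(x)-r_{\text{unc}}(y)|\leq\|x-y\|$ in all cases. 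Granting this, Kantorovich--Rubinstein duality for $W$ on $(\mathcal{X},\|\cdot\|)$ gives $\ex{x\sim\mu}{r_{\text{unc}}(x)} \leq \ex{x\sim g_*(\nu)}{r_{\text{unc}}(x)} + W(g_*(\nu),\mu) \leq \ex{x\sim g_*(\nu)}{r_{\text{unc}}(x)} + \delta$, which isolates the $+\delta$ term and reduces everything to controlling the expectation under $g_*(\nu)$.

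Second, I would bound $\ex{x\sim g_*(\nu)}{r_{\text{unc}}(x)}$. Writing $x=g(z)$ with $z\sim\nu$ and using $r_{\text{unc}}\leq r_{\text{in}}$ together with the modulus of continuity (a latent perturbation $r^*$ attaining the latent robustness $r_{Z}(g(z))$ satisfies $\|g(z+r^*)-g(z)\|\leq\omega(r_{Z}(g(z)))$, hence $r_{\text{in}}(g(z))\leq\omega(r_{Z}(g(z)))$), concavity of $\omega$ and Jensen's inequality yield $\ex{z}{r_{\text{in}}(g(z))} \leq \ex{z}{\omega(r_{Z})} \leq \omega(\ex{z}{r_{Z}})$. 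It then remains to bound $\ex{z}{r_{Z}}$. Here I would invoke Theorem~\ref{thm:image_space_bounds} with $\omega$ set to the identity (the latent-space version of Remark 5), which after rearranging $\Phi(a_{\neq i})=1-\mathbb{P}(C_i)$ gives $\mathbb{P}(r_{Z}>\eta)\leq\sum_i \Phi(-a_{\neq i}-\eta)$, and then integrate the tail: $\ex{z}{r_{Z}} = \int_0^\infty \mathbb{P}(r_{Z}>\eta)\,d\eta \leq \sum_i \int_0^\infty \Phi(-a_{\neq i}-\eta)\,d\eta$. The antiderivative identity $\int_a^\infty \Phi(-u)\,du = \varphi(a) - a\,\Phi(-a)$, where $\varphi(t)=\tfrac{1}{\sqrt{2\pi}}e^{-t^2/2}$, applied after the substitution $u=a_{\neq i}+\eta$, produces exactly $\sum_i \big(-a_{\neq i}\Phi(-a_{\neq i}) + e^{-a_{\neq i}^2/2}/\sqrt{2\pi}\big)$, and monotonicity of $\omega$ finishes the general bound.

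For the equiprobable special case with $K\geq 5$, symmetry forces all $a_{\neq i}$ to equal $a=\Phi^{-1}(1-1/K)$, so $\Phi(-a)=1/K$ and the argument of $\omega$ collapses to $K\varphi(a)-a$. I would then plug in the lower bound $a \geq \sqrt{2\log K - \log(4\pi\log K)} =: a_0$ inherited from the proof of Theorem~\ref{thm:image_space_bounds} (this is where $K\geq 5$ is needed, so that $a_0$ is real), use monotonicity of $\varphi$ to get $K\varphi(a)\leq K\varphi(a_0)=\sqrt{2\log K}$ (the factor $K$ cancels against $e^{-a_0^2/2}=\sqrt{4\pi\log K}/K$), and conclude with the concavity estimate $\sqrt{x}-\sqrt{x-y}\leq y/\sqrt{x}$ applied to $x=2\log K$ and $y=\log(4\pi\log K)$. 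This gives $K\varphi(a)-a \leq \sqrt{2\log K}-a_0 \leq \log(4\pi\log K)/\sqrt{2\log K}$, which is the stated form after applying $\omega$ and adding $\delta$.

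The main obstacle, and the genuinely new ingredient relative to Theorem~\ref{thm:image_space_bounds}, is the $1$-Lipschitz property of $r_{\text{unc}}$ that licenses the Kantorovich--Rubinstein transfer; without it the assumption $W\leq\delta$ could not be converted into a clean additive $\delta$ in expectation. The remaining care is bookkeeping with Gaussian tail bounds in the special case: the cancellation turning $K\varphi(a_0)$ into $\sqrt{2\log K}$ must be matched exactly to the lower bound on $a$ from Theorem~\ref{thm:image_space_bounds}, and one should also check integrability of the tail of $r_{Z}$, which is guaranteed by the Gaussian decay of $\sum_i \Phi(-a_{\neq i}-\eta)$.
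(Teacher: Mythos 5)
Your proposal is correct and follows essentially the same route as the paper: bound $r_{\text{unc}}\le r_{\text{in}}\le\omega(r_{Z})$, apply Jensen via concavity of $\omega$, integrate the tail of $r_{Z}$ using Theorem~\ref{thm:image_space_bounds} with $\omega=\mathrm{id}$ and the identity $\int_a^\infty\Phi(-u)\,du=\frac{e^{-a^2/2}}{\sqrt{2\pi}}-a\Phi(-a)$, and specialize via the lower bound \eqref{eq:lbphiinv} on $\Phi^{-1}(1-1/K)$. The only (cosmetic) difference is the $\delta$-transfer: you package it as $1$-Lipschitzness of $r_{\text{unc}}$ plus the easy direction of Kantorovich--Rubinstein, whereas the paper constructs an explicit coupling $X''$; the paper's construction is precisely an in-line verification of the one-sided Lipschitz inequality $r_{\text{unc}}(X)\le\|X-X'\|+r_{\text{unc}}(X')$, so the two arguments coincide.
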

	
In words, when the data is defined according to a distribution which can be \textit{approximated} by a smooth, high-dimensional generative model, our results show that arbitrary classifiers will have small adversarial examples in expectation. We also note that as $K$ grows, this bound decreases and even goes to zero under the sole condition that $\omega$ is continuous at $0$. Note however that the decrease is slow as it is only logarithmic.

	\section{Experimental evaluation}

	We now evaluate our bounds on the SVHN dataset \cite{netzer2011reading} which contains color images of house numbers, and the task is to classify the digit at the center of the image. In all this section, computations of perturbations are done using the algorithm in \cite{moosavi2015deepfool}.\footnote{Note that in order to estimate robustness quantities (e.g., $r_{\text{in}}$), we do not need the ground truth label, as the definition only involves the change of the estimated label. Estimation of the robustness can therefore be readily done for automatically generated images.} The dataset contains $73,257$ training images, and $26,032$ test images (we do not use the images in the 'extra' set). We train a DCGAN \cite{radford2015unsupervised} generative model on this dataset, with a latent vector dimension $d = 100$, and further consider several neural networks architectures for classification.\footnote{For the SVHN and CIFAR-10 experiments, we show examples of generated images and perturbed images in the appendix (Section C.3). Moreover, we provide in C.1 details on the architectures of the used models.} 
    For each classifier, the empirical robustness is compared to our upper bound.\footnote{To evaluate numerically the upper bound, we have used a probabilistic version of the modulus of continuity, where the property is not required to be satisfied for \textit{all} $z,z'$, but rather with high probability, and accounted for the error probability in the bound. We refer to the appendix for the detailed optimization used to estimate the smoothness parameters.} 
	 In addition to reporting the in-distribution and unconstrained robustness, we also report the robustness in the latent space: $r_Z = \min_{r} \| r \|_2 \text{ s.t. } f(g(z+r)) \neq f(g(z))$. For this robustness setting, note that the upper bound exactly corresponds to Theorem \ref{thm:image_space_bounds} with $\omega$ set to the identity map. Results are reported in Table \ref{tab:svhn_exp}. 
	
		\begin{table*}[t]
		\centering \renewcommand\cellgape{\Gape[2pt]}
		\begin{tabular}{l|c|c|c|c}
			\toprule
			& \makecell[c]{Upper bound\\ on robustness} & 2-Layer LeNet & ResNet-18 & ResNet-101 \\ 
			\midrule
			Error rate & - & 11\% & 4.8\% & 4.2 \% \\ \hline
			Robustness in the $\mathcal{Z}$-space & $16 \times 10^{-3}$ & $6.1 \times 10^{-3}$ & $6.1 \times 10^{-3}$ & $6.6 \times 10^{-3}$ \\ \hline
			\makecell[l]{In-distribution robustness} & $36 \times 10^{-2}$ & $3.3 \times 10^{-2}$ & $3.1 \times 10^{-2}$ & $3.1 \times 10^{-2}$  \\
			\makecell[l]{Unconstrained robustness} & $36 \times 10^{-2}$ & $0.39 \times 10^{-2}$ & $1.1 \times 10^{-2}$ & $1.4 \times 10^{-2}$ \\ \bottomrule
		\end{tabular}
		\caption{\label{tab:svhn_exp} Experiments on SVHN dataset. We report the $25\%$ percentile of the \textit{normalized} robustness at each cell, where probabilities are computed either theoretically (for the upper bound) or empirically. 
		More precisely, we report the following quantities for the upper bound column: For the \textbf{robustness in the $\mathcal{Z}$ space}, we report $t / \mathbb{E} (\| z \|_2)$ such that $\mathbb{P} \left( \min_{r} \| r \|_2 \text{ s.t. } f(g(z+r)) \neq f(g(z)) \leq t \right) \geq 0.25$, using Theorem \ref{thm:image_space_bounds} with $\omega$ taken as identity. For the \textbf{robustness in image-space}, we report $t / \mathbb{E} (\| g(z) \|_2)$ such that $\mathbb{P} \left( r_{\text{in}} (x) \leq t \right) \geq 0.25$, using Theorem \ref{thm:image_space_bounds}, with $\omega$ estimated empirically (Section C.2 in appendix).}
	\end{table*}
	
	Observe first that the upper bound on the robustness in the latent space is of the same order of magnitude as the empirical robustness computed in the $\mathcal{Z}$-space, for the different tested classifiers. This suggests that the isoperimetric inequality (which is the only source of inequality in our bound, when factoring out smoothness) provides a reasonable baseline that is on par with the robustness of best classifiers. 
	In the image space, the theoretical prediction from our classifier-agnostic bounds is one order of magnitude larger than the empirical estimates. Note however that our bound is still non-vacuous, as it predicts the norm of the required perturbation to be approximately $1/3$ of the norm of images (i.e., normalized robustness of $0.36$). 
	This potentially leaves room for improving the robustness in the image space. Moreover, we believe that the bound on the robustness in the image space is not tight (unlike the bound in the $\mathcal{Z}$ space) as the smoothness assumption on $g$ can be conservative.
	
	Further comparisons of the figures between in-distribution and unconstrained robustness in the image space interestingly show that for the simple LeNet architecture, a large gap exists between these two quantities. However, by using more complex classifiers (ResNet-18 and ResNet-101), the gap between in-distribution and unconstrained robustness gets smaller. Recall that Theorem~\ref{thm:relation_two_robustness} says that any classifier can be modified in a way that the in-distribution robustness and unconstrained robustness only differ by a factor $2$, while preserving the accuracy. But this modification may result in a more complicated classifier compared to the original one; for example starting with a linear classifier, the modified classifier will in general not be linear. This interestingly matches with our numerical values for this experiment, as the multiplicative gap between in-distribution and unconstrained robustness approaches $2$ as we make the classification function more complex (e.g., in-distribution robustness of $3.1 \times 10^{-2}$ and out-distribution $1.4 \times 10^{-2}$ for ResNet-101).

    We now consider the more complex CIFAR-10 dataset \cite{krizhevsky2009learning}. The CIFAR-10 dataset consists of 10 classes of $32 \times 32$ color natural images. Similarly to the previous experiment, we used a DCGAN generative model with $d = 100$, and tested the robustness of state-of-the-art deep neural network classifiers. 
    Quantitative results are reported in Table \ref{tab:cifar_exp}. Our bounds notably predict that any classifier defined on this task will have perturbations not exceeding $1/10$ of the norm of the image, for $25\%$ of the datapoints in the distribution. Note that using the PGD adversarial training strategy of \cite{madry2017towards} (which constitutes one of the most robust models to date \cite{uesato2018adversarial}), the robustness is significantly improved, despite still being $\sim 1$ order of magnitude smaller than the baseline of $0.1$ for the in-distribution robustness. The construction of more robust classifiers, alongside better empirical estimates of the quantities involved in the bound/improved bounds will hopefully lead to a convergence of these two quantities, hence guaranteeing optimality of the robustness of our classifiers.
    
    \begin{table*}[t]
    	\centering \renewcommand\cellgape{\Gape[2pt]}
    	\begin{tabular}{l|c|c|c|c}
    		\toprule
    		& \makecell[c]{Upper bound\\on robustness} & VGG \cite{simonyan2014very} & Wide ResNet \cite{zagoruyko2016wide} & \makecell[c]{Wide ResNet \\ + Adv. training \\ \cite{madry2017towards, uesato2018adversarial}} \\ \midrule
    		Error rate & - & $5.5 \%$ &  $3.9 \%$ & $16.0 \%$ \\ \hline
    		Robustness in the $\mathcal{Z}$-space & $0.016$  & $2.5 \times 10^{-3}$ &  $3.0 \times 10^{-3}$ & $3.6 \times 10^{-3}$ \\ \hline 
    		\makecell[l]{In-distribution robustness} & $0.10$ & $4.8 \times 10^{-3}$ & $5.9 \times 10^{-3}$ & $8.3 \times 10^{-3}$ \\ 
    		\makecell[l]{Unconstrained robustness}  & $0.10$ & $0.23 \times 10^{-3}$  & $0.20 \times 10^{-3}$ & $2.0 \times 10^{-3}$\\
    		\bottomrule
    		
    	\end{tabular}
    	\caption{\label{tab:cifar_exp} Experiments on CIFAR-10 (same setting as in Table \ref{tab:svhn_exp}). See appendix for details about models.}
    \end{table*}

	\section{Discussion}

    We have shown the existence of a baseline robustness that no classifier can surpass, whenever the distribution is approximable by a generative model mapping latent representations to images. 
    The bounds lead to informative numerical results: for example, on the CIFAR-10 task (with a DCGAN approximator), our upper bound shows that a significant portion of datapoints can be fooled with a perturbation of magnitude $10\%$ that of an image. Existing classifiers however do not match the derived upper bound. Moving forward, we expect the design of more robust classifiers to get closer to this upper bound. 
The existence of a baseline robustness is fundamental in that context in order to measure the progress made and compare to the optimal robustness we can hope to achieve. 

    In addition to providing a baseline, this work has several practical implications on the \textit{robustness} front. To construct classifiers with better robustness, our analysis suggests that these should have linear decision boundaries in the \textit{latent} space; in particular, classifiers with multiple disconnected classification regions will be more prone to small perturbations. We further provided a constructive way to provably close the gap between unconstrained robustness and in-distribution robustness. 
    
    Our analysis at the intersection of classifiers' robustness and generative modeling has further led to insights onto \textit{generative models}, due to its intriguing generality. If we take as a premise that human visual system classifiers require large-norm perturbations to be fooled (which is implicitly assumed in many works on adversarial robustness, though see \cite{elsayed2018adversarial}), our work shows that natural image distributions cannot be modeled as very high dimensional and smooth mappings. While current dimensions used for the latent space (e.g., $d = 100$) do not lead to any contradiction with this assumption (as upper bounds are sufficiently large), moving to higher dimensions for more complex datasets might lead to very small bounds. To model such datasets, the prior distribution, smoothness and dimension properties should therefore be carefully set to avoid contradictions with the premise. For example, conditional generative models can be seen as non-smooth generative models, as different generating functions are used for each class. We finally note that the derived results do  bound the \textit{norm} of the perturbation, and not the human perceptibility, which is much harder to quantify. We leave it as an open question to derive bounds on more perceptual metrics.
    
    \subsubsection*{Acknowledgments}

    A.F. would like thank Seyed Moosavi, Wojtek Czarnecki, Neil Rabinowitz, Bernardino Romera-Paredes and the DeepMind team for useful feedbacks and discussions.

	\appendix
	
	\section{Proofs}
	
	\label{sec:proofs}
	
	\subsection{Useful results}
	
	Recall that we write the cumulative distribution function for the standard Gaussian distribution $\Phi(x) = \frac{1}{\sqrt{2\pi}} \int_{-\infty}^x e^{-u^2/2} du$. We state the Gaussian isoperimetric inequality~\cite{borell1975brunn,sudakov1978extremal}, the main technical tool used in to prove the results in this paper. 
	\begin{theorem}[Gaussian isoperimetric inequality]
	\label{th:gaussian_isoperimetry}
	Let $\nu_{d}$ be the Gaussian measure on $\RR^d$. Let $A \subseteq \RR^d$ and let $A_{\eta} = \{z \in \RR^d : \exists z' \in A \text{ s.t. } \|z-z'\|_2 \leq \eta\}$. If $\nu_d(A) = \Phi(a)$ then $\nu_d(A_{\eta}) \geq \Phi(a+\eta)$.
	\end{theorem}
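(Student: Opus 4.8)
The plan is to derive the Gaussian isoperimetric inequality from the isoperimetric inequality on the high-dimensional sphere via Poincaré's limit, which is the route taken in the original references \cite{borell1975brunn, sudakov1978extremal}. The starting point, taken as a black box, is Lévy's spherical isoperimetric theorem: on the sphere $\sqrt{N}\, S^{N-1} \subseteq \RR^N$ of radius $\sqrt{N}$, equipped with the normalized uniform measure $\sigma_N$, geodesic caps minimize the measure of the $\eta$-enlargement among all sets of a prescribed measure. That is, if $B \subseteq \sqrt{N}\, S^{N-1}$ has the same $\sigma_N$-measure as a cap $\mathcal{C}$, then $\sigma_N(B_\eta) \geq \sigma_N(\mathcal{C}_\eta)$, where enlargements are taken with respect to the ambient Euclidean distance. (This spherical inequality can itself be proved by two-point symmetrization, but I would cite it rather than reprove it.)

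Next I would invoke Poincaré's observation that the standard Gaussian measure arises as a projection of the uniform measure on a large sphere: if $\theta$ is distributed according to $\sigma_N$ on $\sqrt{N}\, S^{N-1}$, then the first $d$ coordinates $(\theta_1, \dots, \theta_d)$ converge in distribution to $\nu_d = \mathcal{N}(0, I_d)$ as $N \to \infty$. Writing $\pi(\theta) = (\theta_1,\dots,\theta_d)$ for this $1$-Lipschitz projection, the pushforward $\pi_* \sigma_N$ converges weakly to $\nu_d$. Given the target set $A \subseteq \RR^d$, I would lift it to the cylinder $\tilde A = \pi^{-1}(A) \cap \sqrt{N}\, S^{N-1}$, so that $\sigma_N(\tilde A) = (\pi_* \sigma_N)(A) \to \nu_d(A) = \Phi(a)$. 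The competitor cap $\mathcal{C}$ is chosen with exactly this measure; in the projective limit a cap $\{\theta : \theta_1 \geq t\}$ corresponds to a half-space $H = \{z \in \RR^d : z_1 \leq a\}$, for which a direct computation gives $\nu_d(H) = \Phi(a)$ and $\nu_d(H_\eta) = \nu_d(\{z_1 \leq a + \eta\}) = \Phi(a + \eta)$. This identifies half-spaces as the Gaussian extremizers and pins the target value $\Phi(a+\eta)$.

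The core of the argument is then the chain of inequalities obtained by combining the spherical inequality with the Lipschitz projection. Since $\pi$ is $1$-Lipschitz, $\pi((\tilde A)_\eta) \subseteq (\pi(\tilde A))_\eta = A_\eta$, so $(\tilde A)_\eta \subseteq \pi^{-1}(A_\eta)$, giving for every $N$ the sandwich $\sigma_N(\mathcal{C}_\eta) \leq \sigma_N((\tilde A)_\eta) \leq (\pi_* \sigma_N)(A_\eta)$. Passing to the limit, the left end $\sigma_N(\mathcal{C}_\eta)$ converges to the half-space enlargement value $\Phi(a+\eta)$, while the right end $(\pi_* \sigma_N)(A_\eta)$ converges to $\nu_d(A_\eta)$. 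Hence $\Phi(a+\eta) \leq \liminf_N \sigma_N((\tilde A)_\eta) \leq \nu_d(A_\eta)$, which is exactly the claimed bound.

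I expect the main obstacle to be the rigorous justification of these limits, since weak convergence only controls measures of continuity sets. One must therefore approximate $A$ from inside and outside by open and closed sets and use that $\nu_d$ charges no mass to the limiting boundary hyperplane $\{z_1 = a\}$, together with a Portmanteau argument ($\limsup$ over closed sets for the upper bound, $\liminf$ over open sets for the lower bound) to ensure the cap enlargements and the half-space enlargement match in the limit; one also checks that the discrepancy between $(\tilde A)_\eta$ and the lift of $A_\eta$ lives in a boundary layer of vanishing measure. A cleaner alternative that avoids the sphere entirely is Bobkov's functional route: establish the two-point inequality $I(\mathbb{E} f) \leq \mathbb{E}\sqrt{I(f)^2 + |\nabla f|^2}$ for the isoperimetric profile $I = \varphi \circ \Phi^{-1}$ (with $\varphi = \Phi'$) first on the symmetric two-point space, tensorize it over coordinates to the discrete cube, pass to $\nu_d$ via the central limit theorem, and finally recover the enlargement statement by applying the functional inequality to smoothed indicators of $A$ and integrating the resulting differential inequality in $\eta$.
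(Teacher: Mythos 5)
The paper does not actually prove this theorem: it is stated as a known result, imported as a black box with citations to \cite{borell1975brunn, sudakov1978extremal}, and all of the paper's own proofs merely apply it. Your sketch is a correct reconstruction of exactly the classical route from those cited references---L\'evy's spherical isoperimetric inequality combined with the Poincar\'e limit of uniform measures on spheres of radius $\sqrt{N}$, with the genuine remaining work (Portmanteau arguments, inner approximation of $A$ and matching the cap and half-space enlargements in the limit) correctly identified---so it supplies a valid proof of precisely what the paper takes for granted.
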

	
	We then state some useful bounds on the cumulative distribution function for the Gaussian distribution $\Phi$.

	\begin{lemma}[see e.g.,~\cite{duembgen2010bounding}]
	We have for $x \geq 0$, 
	\begin{align}
	\label{eq:bounds_Phi}
	1 - \frac{e^{-x^2/2}}{\sqrt{2\pi}} \frac{2}{x + \sqrt{x^2+8/\pi}}  \leq \Phi(x) 
	\leq 1 - \frac{e^{-x^2/2}}{\sqrt{2\pi}}  \frac{2}{x + \sqrt{x^2+4}} \ .
	\end{align}
	\end{lemma}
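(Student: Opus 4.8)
The plan is to rephrase both inequalities as a single two-sided bound on the Mills ratio and then prove each side by a first-order differential comparison. Dividing the claimed inequalities through by $\varphi(x) := e^{-x^2/2}/\sqrt{2\pi}$ and writing the tail as $1-\Phi(x) = \varphi(x)\,R(x)$, the statement is equivalent to
\[
\frac{2}{x+\sqrt{x^2+4}} \;\le\; R(x) \;\le\; \frac{2}{x+\sqrt{x^2+8/\pi}}, \qquad R(x) := e^{x^2/2}\int_x^\infty e^{-u^2/2}\,du .
\]
The engine of the proof is the elementary identity $R'(x) = xR(x) - 1$ (obtained by differentiating $R$), together with the boundary behaviour $R(x)\to 0$ and $R(x)\sim 1/x$ as $x\to\infty$.

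First I would treat both bounds uniformly. For $c>0$ set $b_c(x) = 2/(x+\sqrt{x^2+c})$ and define the \emph{defect} $P_c(x) := b_c'(x) - x\,b_c(x) + 1$. Using $b_c'(x) = -2/\bigl(s(x+s)\bigr)$ with $s=\sqrt{x^2+c}$, a direct computation gives
\[
P_c(x) = \frac{(x^2+c-2) - x\sqrt{x^2+c}}{\sqrt{x^2+c}\,\bigl(x+\sqrt{x^2+c}\bigr)},
\]
so on $[0,\infty)$ the sign of $P_c$ equals the sign of its numerator $N_c(x) = x^2+c-2 - x\sqrt{x^2+c}$. Writing $D_c = b_c - R$ and subtracting the ODE yields $D_c' = xD_c + P_c$; multiplying by the integrating factor $e^{-x^2/2}$ and integrating from $x$ to $\infty$ (the boundary term at $\infty$ vanishes since $b_c,R\sim 1/x$) produces the representation
\[
b_c(x) - R(x) = -\,e^{x^2/2}\int_x^\infty e^{-t^2/2}\,P_c(t)\,dt ,
\]
which reduces everything to controlling the sign of $P_c$ on $[x,\infty)$.

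The lower bound ($c=4$) is then immediate: squaring the inequality $N_4(x)\le 0$ reduces it to $4\le 0$, so $N_4>0$ and hence $P_4>0$ for all $x\ge 0$; the integral above is positive and $b_4(x) < R(x)$. The upper bound ($c=8/\pi$) is the delicate part and the main obstacle, because $N_{8/\pi}$ is \emph{not} sign-definite. Squaring $N_c(x)\le 0$ (valid since $x^2+c-2>0$ here) reduces it to $(x^2+c)(4-c)\ge 4$, showing that $P_{8/\pi}$ is positive on a short initial interval $[0,x_0)$ and negative on $(x_0,\infty)$, with $x_0^2 = \tfrac{\pi}{\pi-2} - \tfrac{8}{\pi}>0$; thus a naive sign argument fails near $0$. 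The resolution is to exploit the exact boundary value $b_{8/\pi}(0) = \sqrt{\pi/2} = R(0)$, which via the representation forces $\int_0^\infty e^{-t^2/2}P_{8/\pi}(t)\,dt = 0$. Since $P_{8/\pi}$ has exactly one sign change (positive then negative), the partial tail integral $g(x):=\int_x^\infty e^{-t^2/2}P_{8/\pi}(t)\,dt$ satisfies $g(0)=g(\infty)=0$, decreases on $[0,x_0]$ and increases on $[x_0,\infty)$, and is therefore $\le 0$ throughout $[0,\infty)$; the representation then gives $b_{8/\pi}(x)-R(x)\ge 0$. The only genuinely nontrivial inputs are this single-sign-change property of $N_{8/\pi}$ (a one-line algebraic check after squaring) and the boundary value $R(0)=\sqrt{\pi/2}$, and I expect the cancellation-plus-monotonicity argument for the upper bound to be where all the care is needed.
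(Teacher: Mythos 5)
Your proof is correct and complete; note, however, that the paper itself does not prove this lemma at all --- it is stated with a citation to D\"umbgen's note on bounding Gaussian tail probabilities, so there is no in-paper argument to compare against. Your self-contained derivation is sound: the identity $R'(x)=xR(x)-1$ for the Mills ratio, the computation $b_c'(x)=-2/\bigl(s(x+s)\bigr)$ with $s=\sqrt{x^2+c}$, and the resulting formula for the defect $P_c$ all check out, as does the integral representation $b_c(x)-R(x)=-e^{x^2/2}\int_x^\infty e^{-t^2/2}P_c(t)\,dt$ (the boundary term at infinity vanishes since $b_c$ and $R$ both decay). For $c=4$ the squaring argument indeed collapses to the false statement $0\ge 4$, so $P_4>0$ everywhere and the lower bound follows. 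For $c=8/\pi$ the reduction of $N_c\le 0$ to $(x^2+c)(4-c)\ge 4$ is valid because $x^2+c-2>0$, and it gives exactly one sign change at $x_0^2=\tfrac{\pi}{\pi-2}-\tfrac{8}{\pi}>0$; the crucial cancellation $b_{8/\pi}(0)=2\sqrt{\pi/8}=\sqrt{\pi/2}=R(0)$ forces $g(0)=0$, and the decrease-then-increase shape of $g$ between the two zero boundary values yields $g\le 0$, hence the upper bound. This is essentially the classical Komatsu/Birnbaum--Pollak argument, with the choice $c=8/\pi$ being exactly the constant that saturates the bound at $x=0$; it would stand on its own as a proof if the paper chose to include one rather than cite the reference.
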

	
	\begin{lemma}
	\label{lem:prop_gaussian_cdf}
	Let $p \in [1/2,1]$, we have for all $\eps > 0$,
	\begin{align}
	\label{eq:bound_Phi_plus_eps}
	\Phi( \Phi^{-1}(p) + \eps) \geq 1 - (1-p) \sqrt{\frac{\pi}{2}} e^{-\eps^2/2} e^{-\eps \Phi^{-1}(p) }   \ .
	\end{align}
	If $p = 1 - \frac{1}{K}$ for $K \geq 5$ and $\eps \geq 1$, we have
	\begin{align}
	\label{eq:bound_Phi_explicit_K}
	\Phi( \Phi^{-1}(1 - \frac{1}{K}) + \eps) \geq 1 - \frac{1}{K} \sqrt{\frac{\pi}{2}} e^{-\eps^2/2} e^{-\eps \sqrt{\log\left(\frac{K^2}{4\pi \log(K)}\right)}} \ .
	\end{align}
	\end{lemma}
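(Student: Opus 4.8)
The plan is to prove \eqref{eq:bound_Phi_plus_eps} first by a direct tail computation, and then to obtain \eqref{eq:bound_Phi_explicit_K} by specializing $p = 1 - \frac1K$ and lower-bounding $\Phi^{-1}(1-\frac1K)$. Write $a = \Phi^{-1}(p)$, so that $a \geq 0$ since $p \geq 1/2$, and recall $1 - \Phi(a) = 1 - p$. I would rewrite the left tail as $1 - \Phi(a+\eps) = \frac{1}{\sqrt{2\pi}}\int_{a+\eps}^{\infty} e^{-u^2/2}\,du$, substitute $u = v + \eps$, and factor the exponent via $(v+\eps)^2/2 = v^2/2 + v\eps + \eps^2/2$, giving $1 - \Phi(a+\eps) = e^{-\eps^2/2}\frac{1}{\sqrt{2\pi}}\int_a^\infty e^{-v^2/2}e^{-v\eps}\,dv$. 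Since $v \geq a \geq 0$ and $\eps > 0$, I bound $e^{-v\eps} \leq e^{-a\eps}$, pull it out of the integral, and the remaining integral is exactly $1 - \Phi(a) = 1-p$. This already yields $1 - \Phi(a+\eps) \leq (1-p)\,e^{-\eps^2/2}e^{-a\eps}$, which is in fact stronger than \eqref{eq:bound_Phi_plus_eps} because $\sqrt{\pi/2} > 1$; rearranging proves the first claim. (A route matching the preceding lemma is instead to feed the upper tail bound of \eqref{eq:bounds_Phi} for $\Phi(a+\eps)$ and the lower one for $\Phi(a)$ into the ratio $\frac{1-\Phi(a+\eps)}{1-\Phi(a)}$; the algebraic prefactor there is bounded by $1$, again producing the constant $\sqrt{\pi/2}$ with room to spare.)

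For \eqref{eq:bound_Phi_explicit_K}, I set $p = 1 - \frac1K$ in \eqref{eq:bound_Phi_plus_eps}, so that $1-p = \frac1K$ and the bound reads $\Phi(\Phi^{-1}(1-\tfrac1K)+\eps) \geq 1 - \frac1K\sqrt{\frac\pi2}\,e^{-\eps^2/2}e^{-\eps a_K}$, where $a_K := \Phi^{-1}(1-\frac1K)$. Because the right-hand side is increasing in $a_K$ (as $\eps > 0$), it suffices to establish the lower bound $a_K \geq b$ with $b := \sqrt{\log\!\left(\frac{K^2}{4\pi\log K}\right)}$; substituting $e^{-\eps a_K} \leq e^{-\eps b}$ then reproduces \eqref{eq:bound_Phi_explicit_K} exactly. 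Since $\Phi$ is increasing, $a_K \geq b$ is equivalent to $1 - \Phi(b) \geq \frac1K$, which I would verify using the lower tail bound of \eqref{eq:bounds_Phi}.

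The crux is this last verification. Observing that $e^{-b^2/2} = \frac{2\sqrt{\pi\log K}}{K}$, the lower bound of \eqref{eq:bounds_Phi} simplifies to $1 - \Phi(b) \geq \frac{\sqrt{2\log K}}{K}\cdot\frac{2}{b + \sqrt{b^2 + 8/\pi}}$, so the desired $1-\Phi(b) \geq \frac1K$ reduces to the elementary inequality $b + \sqrt{b^2 + 8/\pi} \leq 2\sqrt{2\log K}$. Using $b^2 = 2\log K - \log(4\pi\log K)$, the bound $b \leq \sqrt{2\log K}$ holds as soon as $4\pi\log K \geq 1$, while $\sqrt{b^2 + 8/\pi} \leq \sqrt{2\log K}$ requires the slightly stronger $\log(4\pi\log K) \geq 8/\pi$; a short numerical check shows the latter holds for all $K \geq 5$. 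Summing the two gives the required inequality, and I also need $b$ to be real, i.e. $K^2 \geq 4\pi\log K$, which holds on the same range.

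I expect this final threshold bookkeeping to be the only delicate part: pinning down the constant $8/\pi$, confirming that the regime $K \geq 5$ suffices, and checking $b \geq 0$ so that the tail estimate of \eqref{eq:bounds_Phi} (valid for nonnegative argument) applies. Everything else is a clean change of variables together with monotonicity in $a_K$ and in $\eps$.
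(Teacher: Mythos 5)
Your proof of \eqref{eq:bound_Phi_plus_eps} is correct and takes a genuinely different (and cleaner) route than the paper's: the paper combines the two tail estimates of \eqref{eq:bounds_Phi} and pays the factor $\sqrt{4}/\sqrt{8/\pi}=\sqrt{\pi/2}$ when comparing the two denominators, whereas your change of variables $u=v+\eps$ together with $e^{-v\eps}\le e^{-a\eps}$ on $[a,\infty)$ yields the stronger inequality $1-\Phi(a+\eps)\le(1-p)\,e^{-\eps^2/2}e^{-a\eps}$ with constant $1$, from which \eqref{eq:bound_Phi_plus_eps} follows since $\sqrt{\pi/2}>1$. The reduction of \eqref{eq:bound_Phi_explicit_K} to the lower bound $\Phi^{-1}(1-1/K)\ge b$ with $b=\sqrt{\log(K^2/(4\pi\log K))}$, via monotonicity in $a_K$, also matches the paper (this is exactly \eqref{eq:lbphiinv}).

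The verification of $1-\Phi(b)\ge 1/K$, however, has a genuine gap. To lower-bound the tail $1-\Phi(b)$ you must invoke the \emph{upper} bound on $\Phi$ in \eqref{eq:bounds_Phi}, whose denominator is $b+\sqrt{b^2+4}$; the denominator $b+\sqrt{b^2+8/\pi}$ that you use comes from the lower bound on $\Phi$, i.e., from the \emph{upper} bound on the tail, so your displayed inequality $1-\Phi(b)\ge\frac{\sqrt{2\log K}}{K}\cdot\frac{2}{b+\sqrt{b^2+8/\pi}}$ is not implied by the lemma (and since $8/\pi$ is the sharp constant for the upper tail estimate, it is false in general). With the correct constant the target becomes $b+\sqrt{b^2+4}\le 2\sqrt{2\log K}$, and there your term-by-term splitting breaks down: $\sqrt{b^2+4}\le\sqrt{2\log K}$ is equivalent to $\log(4\pi\log K)\ge 4$, which fails at $K=5$ (the value is about $3.0$) and indeed for all $K\lesssim 77$. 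The target inequality itself \emph{is} true for $K\ge 5$; the paper obtains it by first applying the averaging estimate $x+\sqrt{x^2+4}\le 2\sqrt{x^2+2}$, which reduces the requirement to $b^2+2\le 2\log K$, i.e., $\log(4\pi\log K)\ge 2$, and that holds for all $K\ge 5$. You need this (or an equivalent) extra step before splitting; as written, your numerical check is verifying the wrong threshold.
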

	\begin{proof}
	As $p \geq 1/2$, we have $\Phi^{-1}(p) \geq 0$. Thus,
	\begin{align*}
	\Phi( \Phi^{-1}(p) + \eps) 
	&\geq 1 - \frac{1}{\sqrt{2\pi}} \frac{2 e^{-(\Phi^{-1}(p) + \eps)^2/2}}{\Phi^{-1}(p) + \eps + \sqrt{(\Phi^{-1}(p) + \eps)^2 + 8/\pi}} \\
        &= 1 - \frac{1}{\sqrt{2\pi}} \frac{2 e^{-\Phi^{-1}(p)^2/2 - \eps^2/2 - \eps \Phi^{-1}(p)}}{\Phi^{-1}(p) + \eps + \sqrt{(\Phi^{-1}(p) + \eps)^2 + 8/\pi}} \\
        &= 1 - \left(\frac{1}{\sqrt{2\pi}} \frac{2e^{-\Phi^{-1}(p)^2/2}}{\Phi^{-1}(p) + \sqrt{\Phi^{-1}(p)^2 + 4}} \right) \\
        & \times \frac{\Phi^{-1}(p) + \sqrt{\Phi^{-1}(p)^2 + 4}}{\Phi^{-1}(p) + \eps + \sqrt{(\Phi^{-1}(p) + \eps)^2 + 8/\pi}} e^{- \eps^2/2 - \eps \Phi^{-1}(p)} \ .
        \end{align*}
        Now we use the fact that 
        \begin{align*}
        \left(\frac{1}{\sqrt{2\pi}} \frac{2e^{-\Phi^{-1}(p)^2/2}}{\Phi^{-1}(p) + \sqrt{\Phi^{-1}(p)^2 + 4}} \right) \leq 1 - \Phi(\Phi^{-1}(p)) = 1 - p \ .
        \end{align*}
        As a result,
        \begin{align*}     
        & \Phi( \Phi^{-1}(p) + \eps)  \\
        & \geq 1 - (1 - p) e^{- \eps^2/2 - \eps \Phi^{-1}(p)}  \frac{\Phi^{-1}(p) + \sqrt{\Phi^{-1}(p)^2 + 4}}{\Phi^{-1}(p) + \eps + \sqrt{(\Phi^{-1}(p) + \eps)^2 + 8/\pi}} \\
        &\geq 1 - (1 - p) e^{- \eps^2/2 - \eps \Phi^{-1}(p)}  \frac{\Phi^{-1}(p) + \sqrt{\Phi^{-1}(p)^2 + 4}}{\Phi^{-1}(p) + \sqrt{\Phi^{-1}(p)^2 + 8/\pi}} \\
        &\geq 1 - (1 - p) e^{ - \eps^2/2 }  e^{-\eps \Phi^{-1}(p) }  \frac{\sqrt{4}}{\sqrt{8/\pi}} \ .
	\end{align*}

In the case $p = 1 - \frac{1}{K}$, it suffices to show that that for $K \geq 5$, we have 
\begin{equation}
\label{eq:lbphiinv}
\Phi^{-1}(1-1/K) \geq \sqrt{\log\left(\frac{K^2}{4\pi \log(K)}\right)} \ .
\end{equation}

Using the upper bound in~\eqref{eq:bounds_Phi}, and the fact that $x+\sqrt{x^2+2}\leq 2\sqrt{x^2+1}$, it suffices to show that $\frac{1}{2} \frac{e^{-x^2}}{\sqrt{\pi}\sqrt{x^2+1}} \geq \frac{1}{K}$ where $x = \sqrt{\frac{1}{2} \log\left(\frac{K^2}{4\pi \log(K)}\right)}$. This inequality is equivalent to showing that $\sqrt{\log(K)} \geq \sqrt{x^2+1}$ for the same value of $x$. If we let $u = \log(K)$ this amounts to showing that $\sqrt{u} \geq \sqrt{u - \frac{1}{2} \log(4\pi u) + 1}$ for all $u \geq \log(5)$. For such $u$ one can verify that $- \frac{1}{2} \log(4\pi u) + 1 \leq 0$ and so clearly the inequality is satisfied.

	\end{proof}
	
	\subsection{Proof of Theorem \ref{thm:image_space_bounds}}    
	\label{sec:proof_main_theorem}
	
	\begin{proof}
		To prove the general bound in Eq.~\eqref{eq:main_theorem_image_space}, we define
		\[
		C_{i\rightarrow} = \{x \in C_i : \dist(x,\cup_{j \neq i} C_j) \leq \eta\}.
		\]
		Here, $\dist(x,C)$ is defined as $\inf_{x' \in C} \| x - x' \|$. Let us also introduce the following sets in the $z$-space: $B_i = g^{-1}(C_i)$ and $B_{i\rightarrow} = \{z \in B_i : \dist(z,\cup_{j \neq i} B_j) \leq \omega^{-1}(\eta)\}$. It is easy to verify that $g(B_{i\rightarrow}) \subseteq C_{i\rightarrow}$. Thus we have $\PP(C_{i\rightarrow}) = \nu(g^{-1}(C_{i\rightarrow})) \geq \nu(B_{i\rightarrow})$. Now note that $B_{i\rightarrow} \bigcup \cup_{j\neq i} B_{j}$ is nothing but the set of points that are at distance at most $\omega^{-1}(\eps)$ from $\cup_{j \neq i} B_j$. As such, by the Gaussian isoperimetric inequality (Theorem~\ref{th:gaussian_isoperimetry}) applied with $A = \cup_{j \neq i} B_j$ and $a = a_{\neq i}$, we have $\nu(B_{i\rightarrow}(\eps)) + \nu(\cup_{j \neq i} B_j) \geq \Phi(a_{\neq i}+\omega^{-1}(\eps))$, i.e., $\nu(B_{i\rightarrow}) \geq \Phi(a_{\neq i}+\omega^{-1}(\eps)) - \Phi(a_{\neq i})$. As $B_{i\rightarrow}$ are disjoint for different $i$, we have
		\begin{align*}
		\nu(\cup_i B_{i\rightarrow}(\eta)) \geq \sum_{i=1}^{K} (\Phi(a_{\neq i} + \omega^{-1}(\eta)) - \Phi(a_{\neq i})) \ .
		\end{align*}
The proof of inequality~\eqref{eq:main_theorem_image_space} of the main text then follows by using $\PP(C_{i\rightarrow}) \geq \nu(B_{i\rightarrow})$.
		
		To prove inequality~\eqref{eq:prob_bound_half}, observe that if $\PP(C_i) \leq \frac{1}{2}$ for all $i$, then $\PP(\cup_{j \neq i} C_j) \geq \frac{1}{2}$ for all $i$. Then we use the bound~\eqref{eq:bound_Phi_plus_eps} to get,		
		\begin{align*}
		\PP(\cup_i C_{i\rightarrow}(\eta)) 
		&\geq \sum_{i=1}^{K} (\Phi(\Phi^{-1}(\PP(\cup_{j \neq i} C_j)) + \eta) - \PP(\cup_{j \neq i} C_j)) \\
		&\geq \sum_{i=1}^{K} (1 - (1 - \PP(\cup_{j \neq i} C_j)) \sqrt{\frac{\pi}{2}} e^{-\eps^2/2} - \PP(\cup_{j \neq i} C_j)) \\
		&= (1 - \sqrt{\frac{\pi}{2}} e^{-\eps^2/2}) \sum_{i=1}^{K} (1 - \PP(\cup_{j \neq i} C_j)) \\
		&= 1 - \sqrt{\frac{\pi}{2}} e^{-\eps^2/2} \ .
		\end{align*}		
		
		For the bound~\eqref{eq:prob_bound_class_dependence} that makes explicit the dependence on the number of classes, we simply use the more explicit bound in~\eqref{eq:bound_Phi_explicit_K}.
	\end{proof}

	\subsection{Proof of Theorem \ref{thm:relation_two_robustness}}
	
	\begin{proof}
	Let $x = g(z) \in \mathcal{X}$ and $x' \in \mathcal{X}$. 
	Let $z^*$ be such that $\tilde{f}(x') = f(g(z^*))$. By definition of $\tilde{f}$, we have $\| x' - g(z^*) \| \leq \| x' - g(z) \|$. As such, using the triangle inequality, we get
	\begin{align*}
	\| g(z) - g(z^*) \| &\leq \| g(z) - x' \| + \| x' - g(z^*) \| \\
	&\leq 2 \| g(z) - x' \| \ .
	\end{align*}
	Taking the minimum over all $x'$ such that $\tilde{f}(x) \neq \tilde{f}(x')$, we obtain
	\begin{align*}
	r_{\text{in}} (x) \leq 2 r_{\text{unc}}(x).
	\end{align*}
    \end{proof}

	\subsection{Proof of Theorem \ref{thm:transferability}}
	
	\begin{proof}
    We use the same notations as in the proof of Theorem \ref{thm:image_space_bounds}: let $B_i(f) = g^{-1}(C_i(f))$ and $B_i(h) = g^{-1}(C_i(h))$, and let
    \[
    B_{i\rightarrow} = \{z \in B_i(f) \cup B_i(h) : \dist(x, \overline{B_i(f)} \cap \overline{B_i(h)} ) \leq \omega^{-1}(\eta)\}.
    \]
    where the notation $\overline{B}$ stands for the complement of $B$.
    
		Note that $B_i(f) \cup B_i(h) = \overline{ \overline{B_i(f)} \cap \overline{B_i(h)}}$.
		We have $\nu(\overline{B_i(f)} \cap \overline{B_i(h)} ) \geq \nu(\overline{B_i(f)} ) - \delta = 1 - \nu(B_i(f)) - \delta \geq \frac{1}{2}$. Thus, using the Gaussian isoperimetric inequality with $A = \overline{B_i(f)} \cap \overline{B_i(h)}$, we obtain
		\begin{align*}
		\nu(B_{i\rightarrow}) + \nu(\overline{B_i(f)}\cap \overline{B_i(h)})
		\geq 1- \left(1 - \nu(\overline{B_i(f)} \cap \overline{B_i(h)})\right) \sqrt{\frac{\pi}{2}} e^{-\eps^2/2},
		\end{align*}
		where we also used inequality~\eqref{eq:bound_Phi_plus_eps}. As a result,
		\begin{align*}
		\nu(B_{i\rightarrow}) &\geq (1 - \nu(\overline{B_i(f)} \cap \overline{B_i(h)}) (1 - \sqrt{\frac{\pi}{2}}e^{-\eps^2/2}) \\
		&\geq \nu(B_i(f)) (1 - \sqrt{\frac{\pi}{2}} e^{-\eps^2/2}) \ .
		\end{align*}
		Now assume that $z \in B_{i\rightarrow}$ but also $z \in B_i(f) \cap B_i(h)$. Then it is classified as $i$ for both $f$ and $h$. In addition, the condition $z \in B_{i\rightarrow}$ ensures that there exists $z' \in \overline{B_i(f)} \cap \overline{B_i(h)}$ such that $\| z - z' \|_2 \leq \omega^{-1}(\eps)$. Setting $v = g(z') - g(z)$, we have that $f(g(z)+v) \neq f(g(z))$ and $h(g(z)+v) \neq h(g(z))$ and $\|v\| \leq \omega(\|z-z'\|) \leq \eps$. As such it suffices to show that the set $B_{i\rightarrow} \cap (B_i(f) \cap B_i(h))$ has sufficiently large measure. Indeed, we have
		\begin{align*}
		& \nu(B_{i\rightarrow} \cap (B_i(f) \cap B_i(h)) ) \\
		&\geq \nu(B_{i\rightarrow}) - \nu( B_i(f) \cap \overline{B_i(h)}) - \nu(\overline{B_i(f)} \cap B_i(h)) \ .
		\end{align*}
		Summing over $i$, we get
		\begin{align*}
		\sum_{i=1}^K \nu(B_{i\rightarrow} \cap (B_i(f) \cap B_i(h)) ) 
		&\geq 1 - \sqrt{\frac{\pi}{2}} e^{-\eps^2/2} - 2\delta \ ,
		\end{align*}
		because $\sum_{i=1}^K \nu(B_i(f) \cap \overline{B_i(h)}) + \nu(\overline{B_i(f)} \cap B_i(h)) = 2 \cdot \nu\left\{f \circ g(z) \neq h \circ g(z)\right\} \leq 2\delta$.
		
	\end{proof}

		\subsection{Proof of Theorem \ref{thm:expectation_image_in}}
	
	\begin{proof}
We first treat the case $\delta=0$. Given $z$ we denote by $r_{\Z}(z) = \min \left\{\|r\|_2 : f(g(z+r)) \neq f(g(z))\right\}$. Then it is easy to see that $r_{\text{in}}(g(z)) \leq \omega(r_{\Z}(z))$. As such we have $\E_x[r_{\text{in}} (x)] = \E_z[r_{\text{in}}(g(z))] \leq \E_z[\omega(r_{\Z}(z))] \leq \omega(\E_z[r_{\Z}(z)])$. Now we have
\[
\E_z [r_{\Z} (z) ] = \int_{0}^{\infty} \mathbb{P}_z[ r_{\Z} (z) \geq \eps ] d\eps.
\]
Using a bound similar to Theorem~\ref{thm:image_space_bounds} applied to $r_{\Z}$ we get
\[
\begin{aligned}
\E_z [r_{\Z} (z) ] &\leq \int_{0}^{\infty} 
\left(1 - \sum_{i=1}^{K} \Phi(a_{\neq i} + \eps) - \Phi(a_{\neq i})\right)  d\eps\\
&= \sum_{i=1}^{K}  \int_{0}^{\infty} \Phi(-a_{\neq i} - \eps)  d\eps\\
\end{aligned}
\]
where in the equality, we used the fact that $1 = \sum_{i=1}^K (1-\Phi(a_{\neq i}))$. Now observe that for any $a \in \mathbb{R}$,
\begin{align*}
\int_{0}^{\infty} \Phi(-a - \eps) d\eps
&= \int_{a}^{\infty} \int_{-\infty}^{-u} \frac{e^{-t^2/2}}{\sqrt{2\pi}} dt du \\
&= \int_{-\infty}^{\infty} \left( \int_{a}^{\infty} \mathbf{1}_{t \leq -u} du \right)  \frac{e^{-t^2/2}}{\sqrt{2\pi}} dt \\
&= \int_{-\infty}^{\infty} (-t-a)\mathbf{1}_{a \leq -t} \frac{e^{-t^2/2}}{\sqrt{2\pi}} dt \\
&= \frac{e^{-a^2/2}}{\sqrt{2\pi}} -a \Phi(-a)  .
\end{align*}
As a result,
\[
\begin{aligned}
\E_z [r_{\Z} (z) ] 
&\leq \sum_{i=1}^{K} -a_{\neq i} \Phi(-a_{\neq i}) + \frac{e^{-a_{\neq i}^2/2}}{\sqrt{2\pi}}
\end{aligned}
\]
This establishes the first inequality.

Assuming now that the classes are equiprobable, i.e., $a_{\neq i} = \Phi^{-1}(1-1/K) =: a(K)$ for all $i$ we get that
\[
\E[r_{\text{in}}(x)] \leq \omega\left(-a(K)^2 + \frac{K}{\sqrt{2\pi}} e^{-a(K)^2/2}\right).
\]
Using the bound \eqref{eq:lbphiinv} on $a(K)$ we get:
\[
\begin{aligned}
\E[r_{\text{in}}(x)] &\leq \omega\left(\sqrt{2\log(K)} - \sqrt{2\log(K) - \log(4\pi\log(K))}\right)\\
     &= \omega\left(\frac{\log(4\pi\log(K))}{\sqrt{2\log(K)} + \sqrt{2\log(K) - \log(4\pi\log(K))}}\right)\\
     &\leq \omega\left(\frac{\log(4\pi\log(K))}{\sqrt{2\log(K)}}\right)
\end{aligned}
\]

	We assume now that $g$ is such that $W(g_*(\nu), \mu) \leq \delta$, where $W$ denotes the Wasserstein distance in $(\mathcal{X}, \| \cdot \|)$.  Let $(X, X')$ be a coupling with $X \sim \mu$ and $X' \sim g_*(\nu)$. We will construct a random variable $X''$ such that almost surely $X''$ and $X$ are classified differently. We define $X'' = X'$ if $X$ and $X'$ are classified differently and otherwise $X'' = X' + \vec{r}^*(X')$ where $\vec{r}^*(X')$ is defined to be a vector of minimum norm such that $X' + \vec{r}^*(X')$ and $X'$ are classified differently. Then we have
	\begin{align*}
	& \ex{x \sim \mu}{r_{\text{unc}}(x)} \\
	& \leq \mathbb{E} \|X - X''\| \\
	&=  \mathbb{E} (\mathbf{1}_{f(X) \neq f(X')} \| X - X' \|) + \mathbb{E} (\mathbf{1}_{f(X) = f(X')} \| X - (X' + \vec{r}^*(X')) \| ) \\
	&\leq  \mathbb{E}\| X - X' \|  + \mathbb{E} \| \vec{r}^*(X')) \|  \ .
	\end{align*}
	By choosing a coupling such that $W(g_*(\nu),\nu) = \mathbb{E}\| X - X' \|$, we get $\mathbb{E}\| X - X' \|  \leq \delta$. In addition, $\mathbb{E} \| \vec{r}^*(X')) \| \leq \mathbb{E}_{x} r_{\text{in}}(x)$. The statement therefore follows.
	\end{proof}

\section{Toy example: tightness of Theorem \ref{thm:image_space_bounds}}

\label{sec:complementary_remark_4}

As an illustration to Remark 4, we explicitly show through a toy example that a classifier which is not linear in the $\mathcal{Z}$-space can be significantly less robust than a linear one. 

\begin{figure}[t]
	\centering
	\begin{subfigure}[t]{0.45\textwidth}
		\includegraphics[width=1\textwidth]{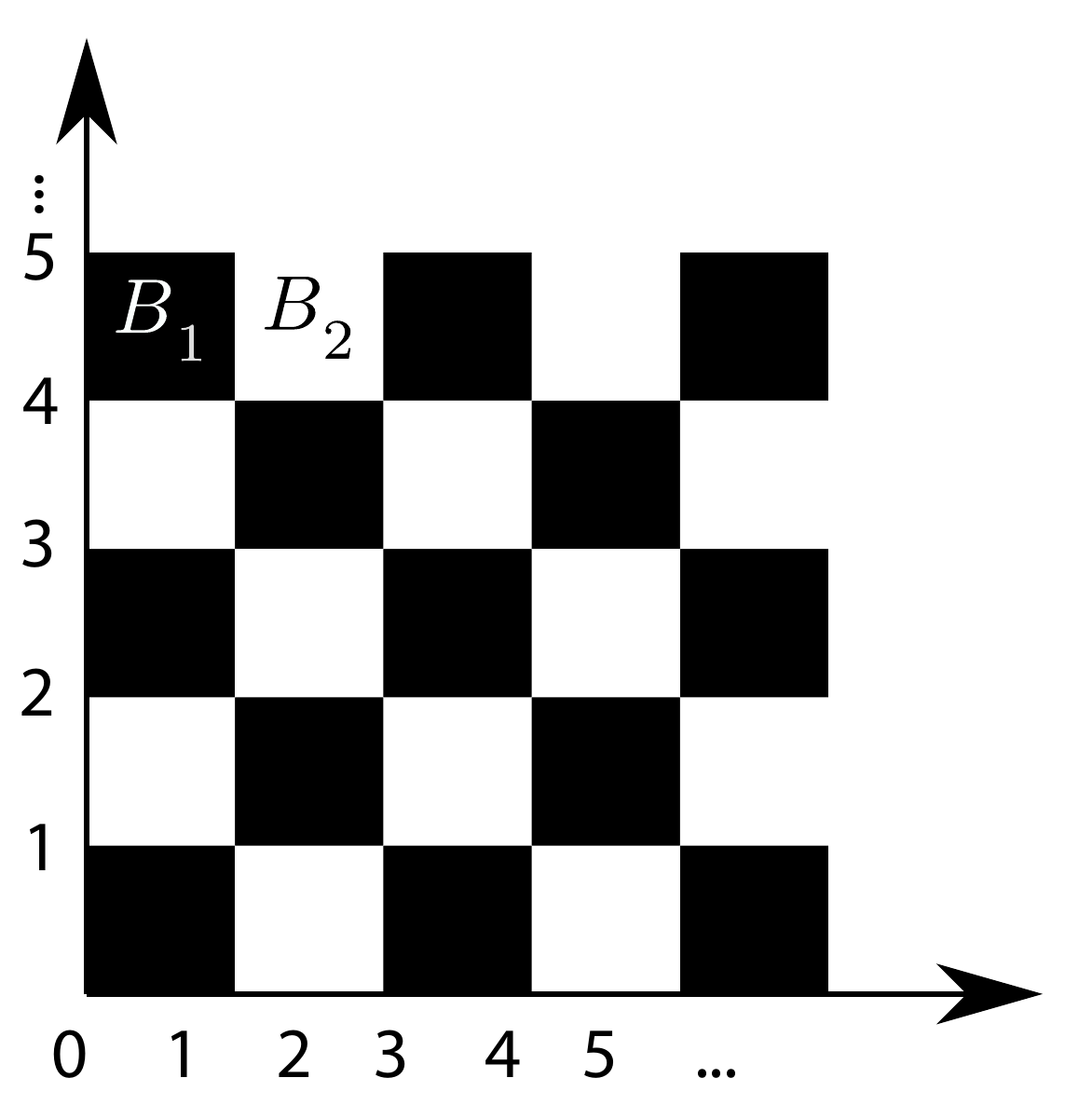}
		\caption{\label{fig:checkerboard_example}}
	\end{subfigure}
	\begin{subfigure}[t]{0.45\textwidth}
		\includegraphics[width=1\textwidth]{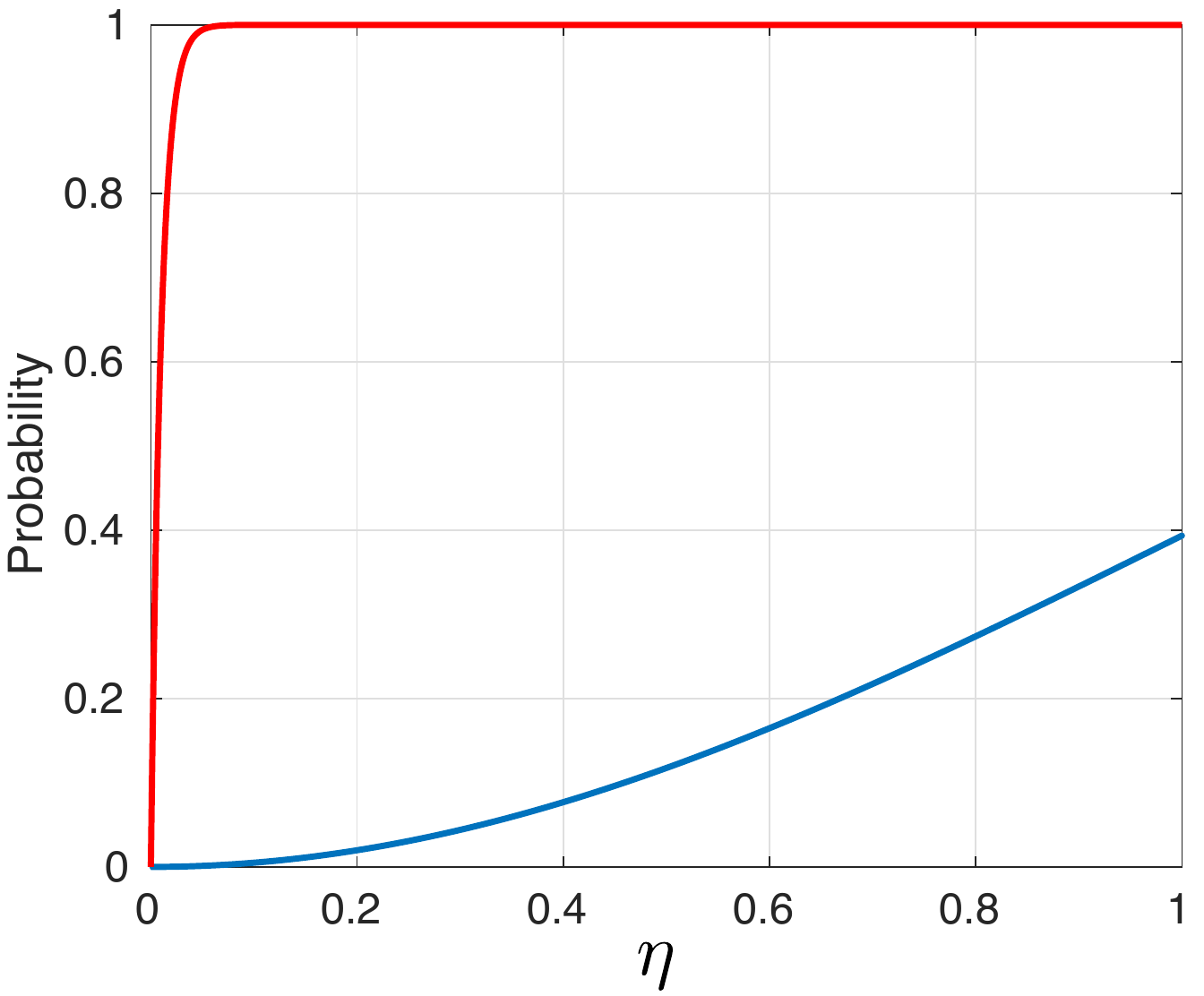}
		\caption{\label{fig:fig_plot_conv_rate}}
	\end{subfigure}
	\caption{Left: Illustration of checkerboard example. Right: Lower bound on robustness as a function of $\eta$ for the general result in Theorem \ref{thm:image_space_bounds} (blue curve) and the checkerboard example in Eq. \ref{eq:isoperimetry_checkerboard} (red curve).}
\end{figure}

\begin{example}[Checkerboard class partitions]
	\label{ex:checkerboard_class_partition}
	Assume that $B_1 = g^{-1}(C_1)$ and $B_2 = g^{-1}(C_2)$ are given by:
	\begin{itemize}
		\item $B_{1} = \{ (z_1, \dots, z_d) : \sum_{i=1}^d \floor{z_i} \mod 2 = 0 \}$,
		\item $B_2 = \RR^d - B_1$.
	\end{itemize}
	See Fig. \ref{fig:checkerboard_example} for an illustration. Then, we have
	\begin{align}
	\label{eq:isoperimetry_checkerboard}
	\mathbb{P}\left( z \in B_1 \text{ and } \dist(z,B_2) \leq \eta \right) + \mathbb{P}\left( z \in B_2 \text{ and } \dist(z,B_1) \leq \eta \right) \geq 1 - (1-\eta)^d.
	\end{align}
	Fig \ref{fig:fig_plot_conv_rate} compares the general bound in Theorem \ref{thm:image_space_bounds} to Eq. \eqref{eq:isoperimetry_checkerboard}. As can be seen, in the checkerboard partition example, the probability of fooling converges much quicker to $1$ (wrt $\eta$) than the general result in Theorem \ref{thm:image_space_bounds}. Hence, a classifier that creates many disconnected classification regions can be much more vulnerable to perturbations than a linear classifier in the latent space.
\end{example}
		\begin{proof}
	We have $\nu(B_1) = \nu(B_2) = \frac{1}{2}$. Let $z \in \RR^d$ in $B_2$ be such that for some $i \in \{1, \dots, d\}$, $z_i - \floor{z_i} \in [0, \eta) \cup (1-\eta, 1)$, then $z - \eta e_i \in B_1$ or $z + \eta e_i \in B_1$, and thus $z$ is at distance at most $\eta$ from $B_1$. As a result, if $z$ is at distance $> \eta$ from $B_1$, then for all $i \in \{1, \dots, d\}$, $z_i - \floor{z_i} \in [\eta, 1-\eta]$. As a result,
	\begin{align*}
	& \mathbb{P}_{z}(z \in B_2, \dist(z, B_1) > \eta) \\
	&\leq \mathbb{P}_{z}(z \in B_2, \forall i, z_i - \floor{z_i} \in [\eps, 1-\eps] ) \\
	&= \frac{1}{\sqrt{2\pi}^d}\sum_{\substack{(j_1, \dots, j_d) \in \ZZ^d,\\j_1 + \dots + j_d \mod 2 = 1}} \int_{j_1+\eta}^{j_1+1 - \eta} dz_1 \dots \int_{j_d+\eta}^{j_d + 1 - \eta} dz_d e^{-\frac{\sum_i z_i^2}{2}} \ .
	\end{align*}
	Now observe that for any $j \in \ZZ$, as the function $z \mapsto e^{-z^2/2}$ is monotone on the interval $[j, j+1]$ (nondecreasing if $j < 0$ and nonincreasing if $j\geq 0$). Thus, we have $\int_{j+\eta}^{j+1 - \eta} e^{-\frac{z^2}{2}} dz \leq (1-\eta) \int_{j}^{j+1} e^{-\frac{z^2}{2}} dz$, when $\eta \leq \frac{1}{2}$. As a result,
	\begin{align*}
	& \mathbb{P}_{z}(z \in B_2, \dist(z, B_1) > \eta) \\
	&\leq \frac{1}{\sqrt{2\pi}^d} (1-\eta)^d \sum_{\substack{(j_1, \dots, j_d) \in \ZZ^d,\\\sum_i j_i \mod 2 = 1}} \int_{j_1}^{j_1+1} d z_1 \dots \int_{j_d}^{j_d + 1} d z_d e^{-\frac{\sum_i z_i^2}{2}} \\
	&= (1-\eta)^d \mathbb{P}_{z}(z \in B_2) \\
	&= \frac{1}{2} (1-\eta)^{d} \ .
	\end{align*}
	With the same reasoning, $\mathbb{P}_{z}(z \in B_1, \dist(z, B_2) > \eta) \leq \frac{1}{2} (1-\eta)^d $ and gives inequality~\eqref{eq:isoperimetry_checkerboard}.
\end{proof}

\section{Experimental results}
	\label{sec:additional_exps}

    \subsection{Details of the used models}
	
	For the SVHN dataset, we resize the images to $64 \times 64$. For the generative model, we use the PyTorch implementation of DCGAN available on \url{https://github.com/pytorch/examples/blob/master/dcgan/main.py} using the default parameters for architecture and optimization. The $2$-layer LeNet classifier has the following architecture:
	\begin{align*}
	& \text{Conv} (5, 2, 16) \rightarrow \text{ReLU} \rightarrow \text{MaxPool}(4) \\
	& \rightarrow \text{Conv} (5, 2, 32) \rightarrow \text{ReLU} \rightarrow \text{MaxPool}(4) \rightarrow \text{FC} (10),
	\end{align*}
	where the parameters of $\text{Conv}$ are kernel size, padding and number of filters, respectively. We used the ResNet18 and ResNet101 architectures available on  \url{https://github.com/kuangliu/pytorch-cifar/blob/master/models/resnet.py}, with a kernel size of $5$ for $\text{Conv1}$ and a stride of $2$. For all $3$ architectures, we used SGD with a learning rate of $0.01$, momentum of $0.9$, batch size of $100$. To solve the problem in Eq. \ref{eq:omega_eta}, we use gradient descent (for the maximization of $\| g(z) - g(z') \|_2$) with learning rate $0.1$ for $1,000$ steps. The upper bound was computed based on $100$ samples of $z$. 
	
	For the CIFAR-10 experiment, we use a similar DCGAN generative model. The VGG-type architecture has 11 conv layers, each of kernel size $3$, with number of output channels $(64, 64, 128, 128, 128, 256, 256, 256, 512, 512, 512)$ and stride $(1, 1, 2, 1, 1, 2, 1, 1, 2, 1, 1)$. Each conv layer is followed by BatchNorm and a ReLU function. For the WideResNet architecture, we use the WRN-28-10 model available on \url{https://github.com/szagoruyko/wide-residual-networks}. SGD is used with learning rate $0.1$, momentum $0.9$, and batchsize $100$. For the adversarially trained Wide ResNet with PGD training, we have used the model of \cite{madry2017towards}.

    \subsection{Numerical evaluation of the upper bound}
    
    To evaluate numerically the upper bound, we have used a probabilistic version of the modulus of continuity, where the property is not required to be satisfied for \textit{all} $z,z'$, but rather with high probability, and accounted for the error probability in the bound. Specifically, while the modulus of continuity function is given by
	$\omega(\delta) = \max_{z} \max_{z': \| z - z' \|_2 \leq \delta} \| g(z) - g(z') \|_2$, we use in the experiments a probabilistic version of the modulus of continuity, given by:
	\begin{equation}
	\label{eq:omega_eta}
	\omega_{\kappa}(\delta) = \min \left\{ \alpha :  \mathbb{P} \left( \sup_{z': \| z - z' \|_2 \leq \delta} \| g(z) - g(z') \|_2 \geq \alpha \right) \leq \kappa \right\}.
	\end{equation}
	Then, the following bound holds for any $\delta, \kappa$:
	\begin{equation}
	\label{eq:new_upper_bound}
	\mathbb{P} \left( r_{\text{in}} (x) \geq \omega_{\kappa} (\delta) \right) \leq \kappa + \underbrace{\mathbb{P} \left( \exists r: \| r \|_2 \geq \delta: f(g(z+r)) \neq f(g(z)) \right).}_{\text{$1 - $ probability in Theorem \ref{thm:image_space_bounds} with $\omega$ identity.}}
	\end{equation}
	For example, when $\kappa$ is set to $0$, we recover the exact bounds in Theorem \ref{thm:image_space_bounds}. When $\kappa > 0$, we have to account for the use of a probabilistic definition of the modulus of continuity in the bound; this exactly corresponds to the additive $\kappa$ term in the probability in Eq. (\ref{eq:new_upper_bound}).
	
	In practice, for a fixed target probability (set to $0.25$ in the experiments of the main paper), it is possible to choose the value of $\delta$ that yields the best bound, since Eq. (\ref{eq:new_upper_bound}) is valid for any $\delta$. For a fixed value of $\delta$, we used  gradient descent (until the loss function stabilizes) in order to solve the optimization problem $\sup_{z: \| z' - z \|_2 \leq \delta} \| g(z) - g(z') \|$. For a fixed value of $\delta$, we hence summarize the procedure used to evaluate the upper bound in Algorithm \ref{alg:numerical_evaluation_upper_bound}. We have used in practice $100$ samples to estimate the upper bound, for each value of $\delta$. For any value of $\delta$, Algorithm \ref{alg:numerical_evaluation_upper_bound} provides an estimate of the upper bound; such an estimate can be improved by using many different values of $\delta$.

	\begin{algorithm}[ht]
\small
\caption{\label{alg:numerical_evaluation_upper_bound}
Numerical evaluation of the upper bound.}
\begin{algorithmic}[1]
\State // \textbf{input}: $\delta$, target probability $p_t$.
\State // \textbf{output}: numerical upper bound.
\State $p \gets p_t - p_u(\delta)$. // $p_u(\delta)$ is the probability from Theorem \ref{thm:image_space_bounds} with $\omega$ set to identity.
\Repeat: $i=1, \dots$
\State Sample $z_i \sim \mathcal{N} (0, I_d)$.
\State Compute $s_i\gets  \text{sup}_{z': \| z_i - z' \|_2 \leq \delta} \| g(z_i) - g(z') \|$.
\Until enough samples are taken
\State Use the above $s_i$ to estimate $\alpha$ such that $\tilde{\mathbb{P}} \left( s_i \geq \alpha \right) \leq p$, where $\tilde{\mathbb{P}}$ is the empirical probability distribution.
\Return $\alpha$.
\end{algorithmic}
\end{algorithm}
    
    \subsection{Illustration of generated images}

Fig. \ref{fig:svhn_image_examples} illustrates generated images for SVHN, as well as corresponding perturbed images that fool a ResNet-18 classifier (\textit{in-distribution} robustness). Similarly, Fig.   \ref{fig:cifar10_images} illustrates examples of generated images for CIFAR-10, as well as perturbed samples required to fool the VGG classifier, where perturbed images are constrained to belong to the data distribution (i.e., \textit{in-distribution} setting).

\begin{figure}[ht]
    	\centering
    	\includegraphics[scale=0.2]{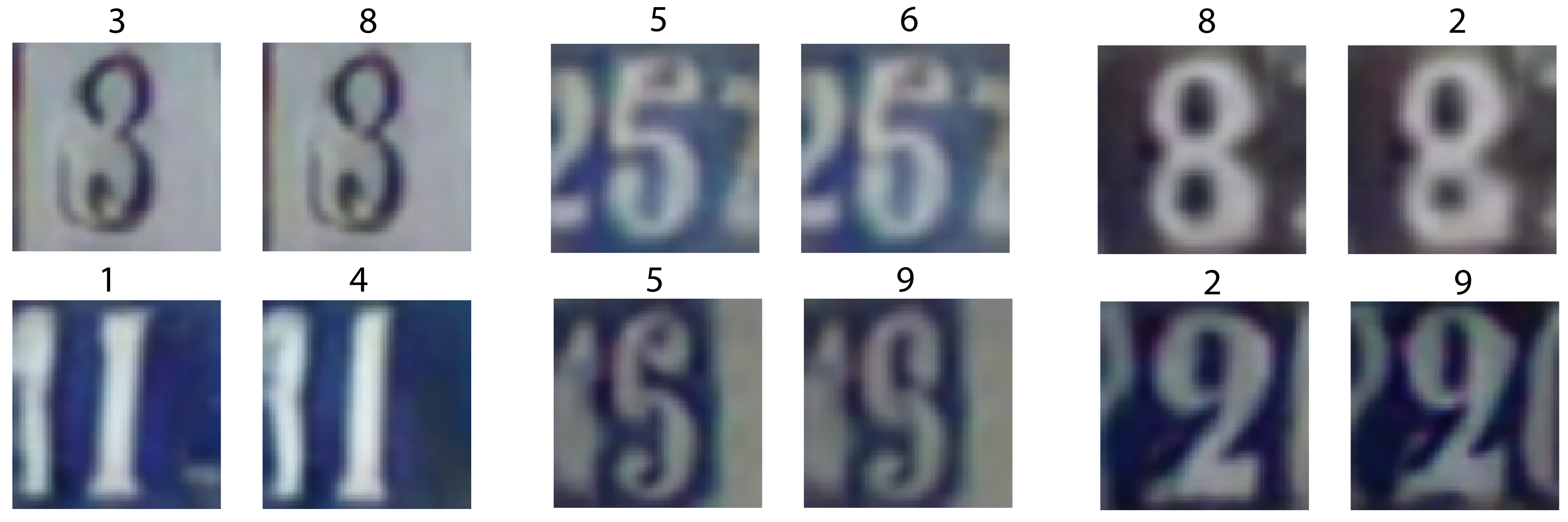}
    	\caption{\label{fig:svhn_image_examples} Examples of generated images with DCGAN for the SVHN dataset, and associated perturbed images (\textit{in-distribution} perturbations). For each pair of images, the left shows the original image, and the right shows the perturbed image. The estimated label (using ResNet-18) of each image is shown on top of each image.}
\end{figure}
    
\begin{figure}[ht]
	\centering
	\includegraphics[scale=0.2]{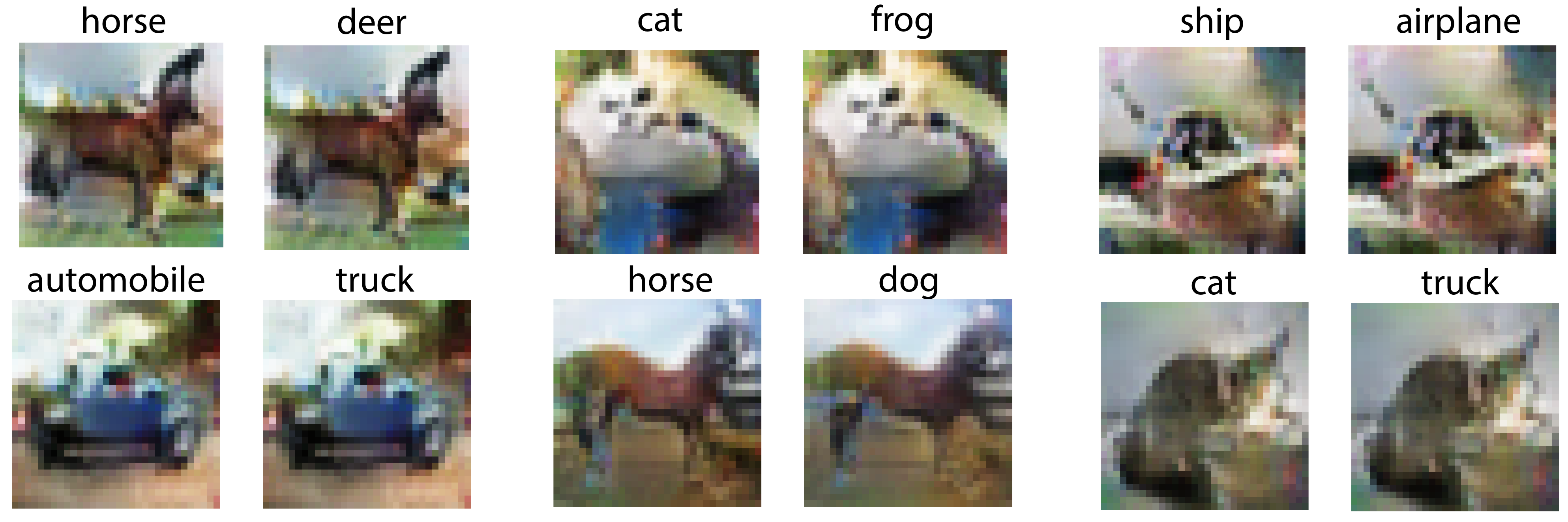}
	\caption{\label{fig:cifar10_images} Examples of generated images with DCGAN, and associated perturbed image (in-distribution perturbation). For each pair of images, the left shows the original image, and the right shows the perturbed image. The estimated label (using the VGG-type convnet) of each image (original and perturbed) is shown on top of each image.}
\end{figure}

\small{
\bibliographystyle{ieeetr}
\bibliography{bibliography}

\begin{thebibliography}{10}

\bibitem{bio2}
M.~Spencer, J.~Eickholt, and J.~Cheng, ``A deep learning network approach to ab
  initio protein secondary structure prediction,'' {\em IEEE/ACM Trans. Comput.
  Biol. Bioinformatics}, vol.~12, no.~1, pp.~103--112, 2015.

\bibitem{bio1}
D.~Chicco, P.~Sadowski, and P.~Baldi, ``Deep autoencoder neural networks for
  gene ontology annotation predictions,'' in {\em ACM Conference on
  Bioinformatics, Computational Biology, and Health Informatics}, pp.~533--540,
  2014.

\bibitem{sp2}
G.~E. Hinton, L.~Deng, D.~Yu, G.~E. Dahl, A.~Mohamed, N.~Jaitly, A.~Senior,
  V.~Vanhoucke, P.~Nguyen, T.~N. Sainath, and B.~Kingsbury, ``Deep neural
  networks for acoustic modeling in speech recognition: The shared views of
  four research groups,'' {\em IEEE Signal Process. Mag.}, vol.~29, no.~6,
  pp.~82--97, 2012.

\bibitem{he2015deep}
K.~He, X.~Zhang, S.~Ren, and J.~Sun, ``Deep residual learning for image
  recognition,'' {\em arXiv preprint arXiv:1512.03385}, 2015.

\bibitem{cv2}
A.~Krizhevsky, I.~Sutskever, and G.~E. Hinton, ``Imagenet classification with
  deep convolutional neural networks,'' in {\em Advances in neural information
  processing systems (NIPS)}, pp.~1097--1105, 2012.

\bibitem{szegedy2013intriguing}
C.~Szegedy, W.~Zaremba, I.~Sutskever, J.~Bruna, D.~Erhan, I.~Goodfellow, and
  R.~Fergus, ``Intriguing properties of neural networks,'' in {\em
  International Conference on Learning Representations (ICLR)}, 2014.

\bibitem{biggio2013evasion}
B.~Biggio, I.~Corona, D.~Maiorca, B.~Nelson, N.~{\v{S}}rndi{\'c}, P.~Laskov,
  G.~Giacinto, and F.~Roli, ``Evasion attacks against machine learning at test
  time,'' in {\em Joint European Conference on Machine Learning and Knowledge
  Discovery in Databases}, pp.~387--402, 2013.

\bibitem{goodfellow2014}
I.~J. Goodfellow, J.~Shlens, and C.~Szegedy, ``Explaining and harnessing
  adversarial examples,'' in {\em International Conference on Learning
  Representations (ICLR)}, 2015.

\bibitem{shaham2015understanding}
U.~Shaham, Y.~Yamada, and S.~Negahban, ``Understanding adversarial training:
  Increasing local stability of neural nets through robust optimization,'' {\em
  arXiv preprint arXiv:1511.05432}, 2015.

\bibitem{madry2017towards}
A.~Madry, A.~Makelov, L.~Schmidt, D.~Tsipras, and A.~Vladu, ``Towards deep
  learning models resistant to adversarial attacks,'' {\em arXiv preprint
  arXiv:1706.06083}, 2017.

\bibitem{cisse2017parseval}
M.~Cisse, P.~Bojanowski, E.~Grave, Y.~Dauphin, and N.~Usunier, ``Parseval
  networks: Improving robustness to adversarial examples,'' in {\em
  International Conference on Machine Learning}, pp.~854--863, 2017.

\bibitem{papernot2015distillation}
N.~Papernot, P.~McDaniel, X.~Wu, S.~Jha, and A.~Swami, ``Distillation as a
  defense to adversarial perturbations against deep neural networks,'' {\em
  arXiv preprint arXiv:1511.04508}, 2015.

\bibitem{alemi2016deep}
A.~A. Alemi, I.~Fischer, J.~V. Dillon, and K.~Murphy, ``Deep variational
  information bottleneck,'' {\em arXiv preprint arXiv:1612.00410}, 2016.

\bibitem{carlini2017adversarial}
N.~Carlini and D.~Wagner, ``Adversarial examples are not easily detected:
  Bypassing ten detection methods,'' in {\em Proceedings of the 10th ACM
  Workshop on Artificial Intelligence and Security}, pp.~3--14, ACM, 2017.

\bibitem{uesato2018adversarial}
J.~Uesato, B.~O'Donoghue, A.~v.~d. Oord, and P.~Kohli, ``Adversarial risk and
  the dangers of evaluating against weak attacks,'' {\em arXiv preprint
  arXiv:1802.05666}, 2018.

\bibitem{robust_vision}
J.~Rauber and W.~Brendel, ``{The robust vision benchmark}.''
  \url{http://robust.vision}, 2017.

\bibitem{ilyas2017robust}
A.~Ilyas, A.~Jalal, E.~Asteri, C.~Daskalakis, and A.~G. Dimakis, ``The robust
  manifold defense: Adversarial training using generative models,'' {\em arXiv
  preprint arXiv:1712.09196}, 2017.

\bibitem{gilmer2018adversarial}
J.~Gilmer, L.~Metz, F.~Faghri, S.~S. Schoenholz, M.~Raghu, M.~Wattenberg, and
  I.~Goodfellow, ``Adversarial spheres,'' {\em arXiv preprint
  arXiv:1801.02774}, 2018.

\bibitem{fawzi2015a}
A.~Fawzi, O.~Fawzi, and P.~Frossard, ``Analysis of classifiers' robustness to
  adversarial perturbations,'' {\em CoRR}, vol.~abs/1502.02590, 2015.

\bibitem{nips2016_ours}
A.~Fawzi, S.~Moosavi-Dezfooli, and P.~Frossard, ``Robustness of classifiers:
  from adversarial to random noise,'' in {\em Neural Information Processing
  Systems (NIPS)}, 2016.

\bibitem{tanay2016boundary}
T.~Tanay and L.~Griffin, ``A boundary tilting persepective on the phenomenon of
  adversarial examples,'' {\em arXiv preprint arXiv:1608.07690}, 2016.

\bibitem{hein2017formal}
M.~Hein and M.~Andriushchenko, ``Formal guarantees on the robustness of a
  classifier against adversarial manipulation,'' in {\em Advances in Neural
  Information Processing Systems}, pp.~2263--2273, 2017.

\bibitem{peck2017lower}
J.~Peck, J.~Roels, B.~Goossens, and Y.~Saeys, ``Lower bounds on the robustness
  to adversarial perturbations,'' in {\em Advances in Neural Information
  Processing Systems}, pp.~804--813, 2017.

\bibitem{sinha2017certifiable}
A.~Sinha, H.~Namkoong, and J.~Duchi, ``Certifiable distributional robustness
  with principled adversarial training,'' {\em arXiv preprint
  arXiv:1710.10571}, 2017.

\bibitem{raghunathan2018certified}
A.~Raghunathan, J.~Steinhardt, and P.~Liang, ``Certified defenses against
  adversarial examples,'' {\em arXiv preprint arXiv:1801.09344}, 2018.

\bibitem{dvijotham2018dual}
K.~Dvijotham, R.~Stanforth, S.~Gowal, T.~Mann, and P.~Kohli, ``A dual approach
  to scalable verification of deep networks,'' {\em arXiv preprint
  arXiv:1803.06567}, 2018.

\bibitem{kos2017adversarial}
J.~Kos, I.~Fischer, and D.~Song, ``Adversarial examples for generative
  models,'' {\em arXiv preprint arXiv:1702.06832}, 2017.

\bibitem{kingma2013auto}
D.~P. Kingma and M.~Welling, ``Auto-encoding variational bayes,'' {\em arXiv
  preprint arXiv:1312.6114}, 2013.

\bibitem{goodfellow2014generative}
I.~Goodfellow, J.~Pouget-Abadie, M.~Mirza, B.~Xu, D.~Warde-Farley, S.~Ozair,
  A.~Courville, and Y.~Bengio, ``Generative adversarial nets,'' in {\em
  Advances in neural information processing systems}, pp.~2672--2680, 2014.

\bibitem{radford2015unsupervised}
A.~Radford, L.~Metz, and S.~Chintala, ``Unsupervised representation learning
  with deep convolutional generative adversarial networks,'' {\em arXiv
  preprint arXiv:1511.06434}, 2015.

\bibitem{arjovsky2017wasserstein}
M.~Arjovsky, S.~Chintala, and L.~Bottou, ``Wasserstein generative adversarial
  networks,'' in {\em International Conference on Machine Learning},
  pp.~214--223, 2017.

\bibitem{gulrajani2017improved}
I.~Gulrajani, F.~Ahmed, M.~Arjovsky, V.~Dumoulin, and A.~C. Courville,
  ``Improved training of wasserstein gans,'' in {\em Advances in Neural
  Information Processing Systems}, pp.~5769--5779, 2017.

\bibitem{borell1975brunn}
C.~Borell, ``The {Brunn-Minkowski} inequality in {Gauss} space,'' {\em
  Inventiones mathematicae}, vol.~30, no.~2, pp.~207--216, 1975.

\bibitem{sudakov1978extremal}
V.~N. Sudakov and B.~S. Tsirel'son, ``Extremal properties of half-spaces for
  spherically invariant measures,'' {\em Journal of Soviet Mathematics},
  vol.~9, no.~1, pp.~9--18, 1978.

\bibitem{defensegan}
P.~Samangouei, M.~Kabkab, and R.~Chellappa, ``Defense-gan: Protecting
  classifiers against adversarial attacks using generative models,'' in {\em
  International Conference on Learning Representations}, 2018.

\bibitem{liu2016delving}
Y.~Liu, X.~Chen, C.~Liu, and D.~Song, ``Delving into transferable adversarial
  examples and black-box attacks,'' {\em arXiv preprint arXiv:1611.02770},
  2016.

\bibitem{netzer2011reading}
Y.~Netzer, T.~Wang, A.~Coates, A.~Bissacco, B.~Wu, and A.~Y. Ng, ``Reading
  digits in natural images with unsupervised feature learning,'' in {\em NIPS
  workshop on deep learning and unsupervised feature learning}, 2011.

\bibitem{moosavi2015deepfool}
S.-M. Moosavi-Dezfooli, A.~Fawzi, and P.~Frossard, ``Deepfool: a simple and
  accurate method to fool deep neural networks,'' in {\em IEEE Conference on
  Computer Vision and Pattern Recognition (CVPR)}, 2016.

\bibitem{krizhevsky2009learning}
A.~Krizhevsky and G.~Hinton, ``Learning multiple layers of features from tiny
  images,'' {\em Master's thesis, Department of Computer Science, University of
  Toronto}, 2009.

\bibitem{simonyan2014very}
K.~Simonyan and A.~Zisserman, ``Very deep convolutional networks for
  large-scale image recognition,'' in {\em International Conference on Learning
  Representations (ICLR)}, 2014.

\bibitem{zagoruyko2016wide}
S.~Zagoruyko and N.~Komodakis, ``Wide residual networks,'' {\em arXiv preprint
  arXiv:1605.07146}, 2016.

\bibitem{elsayed2018adversarial}
G.~F. Elsayed, S.~Shankar, B.~Cheung, N.~Papernot, A.~Kurakin, I.~Goodfellow,
  and J.~Sohl-Dickstein, ``Adversarial examples that fool both human and
  computer vision,'' {\em arXiv preprint arXiv:1802.08195}, 2018.

\bibitem{duembgen2010bounding}
L.~Duembgen, ``Bounding standard gaussian tail probabilities,'' {\em arXiv
  preprint arXiv:1012.2063}, 2010.

\end{thebibliography}
}

\end{document}